\setlist[itemize]{noitemsep,label=$-$}
\setlist[enumerate]{noitemsep}
\newtheorem{theorem}{Theorem}[section]
\newtheorem{definition}[theorem]{Definition}
\newtheorem{example}[theorem]{Example}
\newtheorem{lem}[theorem]{Lemma}
\newtheorem{claim}[theorem]{Claim}
\newcommand{\myignore}[1]{}
\newcommand{\TealBlue}[1]{\textcolor{TealBlue}{#1}}
\newcommand{\Peach}[1]{\textcolor{Peach}{#1}}
\newcommand{\Cyan}[1]{\textcolor{cyan}{#1}}
\newcommand{\Red}[1]{\textcolor{red}{#1}}
\newcommand{\Navy}[1]{\textcolor{Blue}{#1}}
\newcommand{\Blue}[1]{\textcolor{Blue}{#1}}
\newcommand{\Green}[1]{\textcolor{OliveGreen}{#1}}
\newcommand{\Black}[1]{\textcolor{black}{#1}}
\newcommand{\White}[1]{\textcolor{white}{#1}}
\newcommand{\Code}[1]{\Cyan{\texttt{#1}}}
\newcommand{\Gray}[1]{\textcolor{gray}{#1}}
\newcommand{\Magenta}[1]{\textcolor{magenta}{#1}}
\newcommand{\Maroon}[1]{\textcolor{Maroon}{#1}}
\newcommand{\tTealBlue}[1]{\ifnum\tdotoggle=1\TealBlue{#1}\else{#1}\fi}
\newcommand{\tPeach}[1]{\ifnum\tdotoggle=1\Peach{#1}\else{#1}\fi}
\newcommand{\tCyan}[1]{\ifnum\tdotoggle=1\Cyan{#1}\else{#1}\fi}
\newcommand{\tRed}[1]{\ifnum\tdotoggle=1\Red{#1}\else{#1}\fi}
\newcommand{\tNavy}[1]{\ifnum\tdotoggle=1\Navy{#1}\else{#1}\fi}
\newcommand{\tBlue}[1]{\ifnum\tdotoggle=1\Blue{#1}\else{#1}\fi}
\newcommand{\tGreen}[1]{\ifnum\tdotoggle=1\Green{#1}\else{#1}\fi}
\newcommand{\tBlack}[1]{\ifnum\tdotoggle=1\Black{#1}\else{#1}\fi}
\newcommand{\tWhite}[1]{\ifnum\tdotoggle=1\White{#1}\else{#1}\fi}
\newcommand{\tCode}[1]{\ifnum\tdotoggle=1\Code{#1}\else{#1}\fi}
\newcommand{\tGray}[1]{\ifnum\tdotoggle=1\Gray{#1}\else{#1}\fi}
\newcommand{\tMagenta}[1]{\ifnum\tdotoggle=1\Magenta{#1}\else{#1}\fi}
\newcommand{\tMaroon}[1]{\ifnum\tdotoggle=1\Maroon{#1}\else{#1}\fi}
\let\OLDthebibliography\thebibliography
\renewcommand\thebibliography[1]{
	\OLDthebibliography{#1}
	\setlength{\parskip}{1.3pt}
}
\newcommand{\inbrace}[1]{\left \{ #1 \right \}}
\newcommand{\inparen}[1]{\left ( #1 \right )}
\newcommand{\insquare}[1]{\left [ #1 \right ]}
\newcommand{\inabs}[1]{\begin{vmatrix} #1 \end{vmatrix}}
\newcommand{\set}[1]{\inbrace{#1}}
\DeclareMathOperator*{\sign}{sign}
\DeclareMathOperator*{\argmin}{argmin}
\DeclareMathOperator*{\Ex}{\mathbb{E}}
\DeclareMathOperator*{\Prob}{Pr}
\newcommand{\eps}{\varepsilon}
\newcommand{\bbN}{{\mathbb N}}
\newcommand{\bbR}{{\mathbb R}}
\let\boldm\bm
\renewcommand{\bm}{{\boldm m}}
\newcommand{\calA}{\mathcal{A}}
\newcommand{\calB}{\mathcal{B}}
\newcommand{\calC}{\mathcal{C}}
\newcommand{\calD}{\mathcal{D}}
\newcommand{\calG}{\mathcal{G}}
\newcommand{\calH}{\mathcal{H}}
\newcommand{\calU}{\mathcal{U}}
\newcommand{\calX}{\mathcal{X}}
\newcommand{\calY}{\mathcal{Y}}
\newcommand{\ERM}{{\rm ERM}}
\newcommand{\RERM}{{\rm RERM}}
\newcommand{\ind}{\mathbbm{1}}
\newcommand{\Risk}{{\rm R}}
\newcommand{\err}{{\rm err}}
\newcommand{\vc}{{\rm vc}}
\newcommand{\MAJ}{{\rm MAJ}}
\newcommand{\co}{{\rm co}}
\newcommand{\im}{{\rm im}}
\newcommand{\removed}[1]{}
\newcommand{\abs}[1]{\left\lvert #1 \right\rvert}
\colorlet{sgreen}{black!35!green}
\newcommand{\todol}[2][]{{%
 \let\marginpar\marginnote
 \reversemarginpar
 \renewcommand{\baselinestretch}{0.8}%
 \todo[#1]{#2}}}
\newenvironment{customthm}[1]
  {\innercustomthm}
  {\endinnercustomthm}
\title{Reducing Adversarially Robust Learning to Non-Robust PAC Learning}
\author{%
  Omar Montasser\\
  \texttt{omar@ttic.edu}\\
  \And
  Steve Hanneke\\
  \texttt{steve.hanneke@gmail.com}\\ 
  \And
  Nathan Srebro\\
  \texttt{nati@ttic.edu}
  \AND
  
  {\normalfont Toyota Technological Institute at Chicago}\\
}
\begin{document}

\maketitle

\begin{abstract}
  We study the problem of reducing adversarially robust learning to standard PAC learning, i.e. the complexity of learning adversarially robust predictors using access to only a black-box non-robust learner. We give a reduction that can robustly learn any hypothesis class $\calC$ using any non-robust learner $\calA$ for $\calC$.  The number of calls to $\calA$ depends logarithmically on the number of allowed adversarial perturbations per example, and we give a lower bound showing this is unavoidable.
\end{abstract}

\section{Introduction}

We consider the problem of learning predictors that are {\em robust} to adversarial examples at test time. That is, we would like to be robust against an adversary $\calU:\calX \to 2^{\calX}$ that can perturb examples at test-time, where $\calU(x)\subseteq \calX$ is the set of allowed corruptions the adversary might replace $x$ with, as measured by the {\em robust risk}:
\begin{equation}
    \label{eqn:rob-risk}
    \Risk_{\calU}(\hat{h};\calD) \triangleq \Ex_{(x,y) \sim \calD}\!\left[ \sup\limits_{z\in\calU(x)} \ind[\hat{h}(z)\neq y] \right].
\end{equation}
For example, $\calU$ could be perturbations of bounded $\ell_p$-norms \citep{DBLP:journals/corr/GoodfellowSS14}. 

We ask whether we can adversarialy robustly learn a given target hypothesis class $\calC \subseteq \calY^{\calX}$ (e.g. neural networks)---that is, whether, if there exists a predictor in $\calC$ with zero robust risk w.r.t.~some unknown distribution $\calD$ over $\calX\times \calY$, can we find a predictor with (arbitrarily small) robust risk using $m$ i.i.d.\ (uncorrupted) samples $S=\set{(x_i,y_i)}_{i=1}^{m}$ from $\calD$. Recently, \cite{pmlr-v99-montasser19a} showed that if $\calC$ is PAC-learnable non-robustly, then $\calC$ is also adversarially robustly learnable. However, their result is not constructive and the robust learning algorithm given is inefficient, complex, and does not actually directly use a non-robust learner.  In this paper, we ask a more constructive version of this question:
\begin{center}
\textit{Can we learn adversarially robust predictors given only black-box access to a non-robust learner?}
\end{center}

That is, we are asking whether it is possible to reduce adversarially robust learning to standard non-robust learning. Since we have a plethora of algorithms devised for standard non-robust learning, it would be useful if we could design efficient \textit{reduction} algorithms that leverage such non-robust learning algorithms in a black-box manner to learn \textit{robustly}. That is, design generic wrapper methods that take as input a learning algorithm $\calA$ and a specification of the adversary $\calU$, and robustly learn by calling $\calA$. Many systems in practice perform standard learning but with no robustness guarantees, and therefore, it would be beneficial to provide wrapper procedures that can guarantee adversarial robustness in a black-box manner without needing to modify current learning systems internally. 

\textbf{Related Work} Recent work \citep{DBLP:conf/soda/MansourRT15, DBLP:conf/colt/FeigeMS15, DBLP:conf/alt/FeigeMS18, DBLP:conf/alt/AttiasKM19} can be interpreted as giving reduction algorithms for adversarially robust learning. Specifically, \cite{DBLP:conf/colt/FeigeMS15} gave a reduction algorithm that can robustly learn a {\em finite} hypothesis class $\calC$ using black-box access to an $\ERM$ for $\calC$. Later, \cite{DBLP:conf/alt/AttiasKM19} improved this to handle {\em infinite} hypothesis classes $\calC$. But their complexity and the number of calls to $\ERM$ depend super-linearly on the number of possible perturbations $\left\lvert\calU\right\rvert = \sup_x \left\lvert\calU(x)\right\rvert$, which is undesirable for most types of perturbations---we completely avoid a sample complexity dependence on $|\calU|$, and reduce the oracle complexity to at most a poly-logarithmic dependence.  Furthermore, their work assumes access specifically to an $\ERM$ procedure, which is a very specific type of learner, while we only require access to any method that PAC-learns $\calC$ and whose image has bounded VC-dimension.

A related goal was explored by \citet{salman2020black}: They proposed a method to {\em robustify pre-trained predictors}. Their method takes as input a black-box \textit{predictor} (not a learning algorithm) and a point $x$, and outputs a label prediction $y$ for $x$ and a radius $r$ such that the label $y$ is robust to $\ell_2$ perturbations of radius $r$. But this doesn't guarantee that the predictions $y$ are correct, nor that the radius $r$ would be what we desire, and even if the predictor was returned by a learning algorithm and has a very small non-robust error, we do not end up with any gurantee on the robust risk of the robustified predictor.  In this paper, we require black-box access to a \textit{learning algorithm} (not just to a single predictor), but we output a predictor that {\em is} guaranteed to have \textit{small} robust risk (if one exists in the class, see Definition \ref{def:robust-pac}). We also provide a general treatment for arbitrary adversaries $\calU$, not just $\ell_p$ perturbations.

Finally, we note that the approach of \citet{pmlr-v99-montasser19a} can be interpreted as using black-box access to an oracle $\RERM_\calC$ minimizing the robust {\em empirical} risk:
\begin{equation}
\label{eqn:rerm}
    \hat{h}\in \RERM_{\calC}(S) \triangleq \argmin_{h\in \calC} \frac{1}{m} \sum_{i=1}^{m} \sup\limits_{z\in\calU(x)} \ind[h(z)\neq y].
\end{equation}
But this goes well beyond just a {\em non-robust} learning algorithm, or even \ERM.

\textbf{Efficient Reductions} From a computational perspective, the relationship between standard non-robust learning and adversarially robust learning is not well-understood. It is natural to wonder whether there is a general efficient reduction for adversarially robust learning, using only non-robust learners. Recent work has provided strong evidence that this is not the case in general. Specifically, \cite{bubeck2019adversarial} showed that there exists a learning problem  that can be learned efficiently non-robustly, but is computationally intractable to learn robustly (under plausible 
complexity-theoretic assumptions). In this paper, we aim to understand when such efficient reductions {\em are} possible.

\textbf{Main Results} When studying reductions of adversarially robust learning to non-robust learning, an important aspect emerges regarding the form of access that the reduction algorithm has to the adversary $\calU$. How should we model access to the sets of adversarial perturbations represented by $\calU$? 

In Section~\ref{sec:explicit}, we study the setting where the reduction algorithm has explicit access/knowledge of the possible adversarial perturbations induced by the adversary $\calU$ on the \textit{training examples}. We first show that there is an algorithm that can learn adversarially robust predictors with black-box oracle access to a non-robust algorithm:

\begin{customthm}{3.1}[Informal]
For any adversary $\calU$, Algorithm~\ref{alg:robustify} robustly learns any target class $\calC$ using any black-box non-robust PAC learner $\calA$ for $\calC$, with $O(\log^2 |\calU|)$ oracle calls to $\calA$ and sample complexity independent of $|\calU|$. 
\end{customthm}

The oracle complexity dependence on $\left\lvert\calU\right\rvert$, even if only logarithmic, might be disappointing, but we show it is unavoidable:

\begin{customthm}{3.2} [Informal]
There exists an adversary $\calU$ such that for any reduction algorithm $\calB$, there exists a target class $\calC$ and a PAC learner $\calA$ for $\calC$ such that $\Omega(\log |\calU|)$ oracle queries to $\calA$ are necessary to robustly learn $\calC$. 
\end{customthm}

This tells us that only requiring a non-robust PAC learner $\calA$ is not enough to avoid the $\log|\calU|$ dependence, even with explicit knowledge of $\calU$. In Section~\ref{sec:mistake-oracle}, we show that having an {\em online} learner $\calA$ for $\calC$, allows us to robustly learn $\calC$ with access to a mistake oracle for $\calU$ (see Definition~\ref{def:mistake-oracle}) where no explicit knowledge of $\calU$ is assumed and no dependence on $|\calU|$ is incurred:

\begin{customthm}{4.2} [Informal]
There exists an algorithm $\calB$ that can robustly learn any target class $\calC$ w.r.t. any adversary $\calU$ when given access to a mistake oracle $\mathsf{O}_{\calU}$ and a black-box online learner $\calA$ for $\calC$. The sample complexity, number of calls to $\calA$, and number of calls to $\mathsf{O}_{\calU}$ are independent of $|\calU|$.
\end{customthm}

\section{Preliminaries}

Let $\calX$ denote the instance space and $\calY=\set{\pm1}$ denote the label space. Let $\calU:\calX\to 2^{\calX}$ denote an arbitrary adversary. For any adversary $\calU$, denote by $\left\lvert\calU\right\rvert \triangleq \sup_x \left\lvert\calU(x)\right\rvert$ the number of allowed adversarial perturbations. We start with formalizing the notions of non-robust (standard) PAC learning and robust PAC learning: 

\begin{definition} [PAC Learnability]
\label{def:weaklearn}
A target hypothesis class $\calC \subseteq \calY^\calX$ is said to be PAC learnable if there exists a learning algorithm $\calA:(\calX\times \calY)^*\to \calY^\calX$ with sample complexity $m(\eps, \delta):(0,1)^2\to \bbN$ such that: for any $\eps, \delta \in (0,1)$, for any distribution $\calD$ over $\calX\times \calY$, and any target concept $c\in\calC$ with zero risk, $\err_\calD(c)=0$, with probability at least $1-\delta$ over $S\sim \calD^{m(\eps,\delta)}$, 
\[\err_{\calD}(\calA(S)) \triangleq \Prob_{(x,y)\sim \calD} \insquare{\calA(S)(x) \neq y} \leq \eps.\]
\end{definition}

\begin{definition} [Robust PAC Learnability]
\label{def:robust-pac}
A target hypothesis class $\calC \subseteq \calY^\calX$ is said to be {\em robustly} PAC learnable with respect to adversary $\calU$ if there exists a learning algorithm $\calB:(\calX\times \calY)^*\to \calY^\calX$ with sample complexity $m(\eps,\delta):(0,1)^2\to \bbN$ such that: for any $\eps,\delta \in (0,1)$, for any distribution $\calD$ over $\calX\times \calY$, and any target concept $c\in\calC$ with zero robust risk, $\Risk_\calU(c;\calD)=0$, with probability at least $1-\delta$ over $S\sim \calD^{m(\eps,\delta)}$,
\[\Risk_{\calU}(\calB(S);\calD)\leq \epsilon.\]
\end{definition}

We recall the Vapnik-Chervonenkis dimension (VC dimension) is defined as follows:

\begin{definition}[VC dimension]
\label{VCdim}
We say that a sequence $\{x_1,\dots,x_k\}\in\calX$ is shattered by $\calC$ if $\forall y_1,\dots,y_k\in \calY, \exists h\in \calC$ such that $\forall i\in[k], h(x_i)=y_i$. The VC dimension of $\calC$ (denoted $\vc(\calC)$) is then defined as the largest integer $k$ for which there exists $\{x_1,\dots,x_k\}\in \calX$ that is shattered by $\calC$. If no such $k$ exists, then $\vc(\calC)$ is said to be infinite.
\end{definition}

Another important complexity measure that is utilized in the study of robust PAC learning is the notion of {\em dual} VC dimension, which we define below: 

\begin{definition}[Dual VC dimension]
\label{def:DualVCdim}
Consider a \emph{dual space} $\calG$: a set of functions $g_{x} : \calC \to \calY$ defined as $g_{x}(h) = h(x)$, 
for each $h \in \calC$ and each $x \in \calX$. Then, the dual VC dimesion of $\calC$ (denoted $\vc^*(\calC)$) is defined as the VC dimension of $\calG$. In other words, $\vc^*(\calC)=\vc(\calG)$ and it represents the largest set $\set{h_1,\dots, h_k}$ that is shattered by points in $\calX$.
\end{definition}
If the VC dimension is finite, then so is the dual VC dimesion, and it can be bounded as $\vc^{*}(\calC) < 2^{\vc(\calC)+1}$ \citep{assouad:83}.  Although this exponential dependence is tight for some classes, for many natural classes, such as linear predictors and some neural networks (see, e.g.\ Lemma~\ref{lem:dualvc-convexhull}), the primal and dual VC dimensions are equal, or at least polynomially related.

We also formally define what we mean by a \textit{reduction} algorithm:
\begin{definition} [Reduction Algorithm]
\label{def:reduction}
For an adversary $\calU$, a reduction algorithm $\calB_\calU$ takes as input a black-box learning algorithm $\calA$ and a training set $S\subseteq \calX \times \calY$, and can use $\calA$ by calling it $T$ times on inputs $\calB_\calU$ constructs each of size $m_0 \in \bbN$, and outputs a predictor $f\in\calY^{\calX}$. 
\end{definition}
We emphasize that $\calB_\calU$ is allowed to be adaptive in its calls to $\calA$.  That is, it can call $\calA$ on one constructed data set, then construct another data set depending on the returned predictor, and call $\calA$ on this new data set.  Such adaptive use of the base learner $\calA$ is central to boosting-type constructions.

We know that a hypothesis class $\calC$ is PAC learnable if and only if its VC dimension is finite \citep{vapnik:71,vapnik:74,blumer:89,ehrenfeucht:89}. And in this case, $\calC$ is properly PAC learnable with $\ERM_{\calC}$. \citet[Theorem 4]{pmlr-v99-montasser19a} showed that if $\calC$ is PAC learnable, then $\calC$ is adversarially robustly PAC learnable with an improper learning rule that required a $\RERM_\calC$ oracle (see Equation~\ref{eqn:rerm}) and sample complexity of $\Tilde{O}\inparen{\frac{\vc(\calC)\vc^*(\calC)}{\eps}}$. In this paper, we study whether it is possible to adversarially robustly PAC learn $\calC$ using only a black-nox non-robust PAC learner $\calA$ for $\calC$.  We will not require $\calA$ is ``proper''' (i.e.~returns a predictor in $\calC$), but we will rely on it returning a predictor in some, possibly much larger, class which still has finite VC-dimension.  To this end, we denote by $\vc(\calA)=\vc(\im(\calA))$ and $\vc^*(\calA)=\vc^*(\im(\calA))$ the primal and dual VC dimension of the image of $\calA$, i.e.~the class $\im(\calA) = \left\{ \calA(S) \middle| S \in (\calX\times\calY)^* \right\}$ of the possible hypothesis $\calA$ might return.  For $\ERM$, or any other proper learner, $\im(\calA)\subseteq\calC$ and so $\vc(\calA)\leq\vc(\calC)$ and $\vc^*(\calA)\leq\vc^*(\calC)$.

\section{Learning with Explicitly Specified Adversarial Perturbations}
\label{sec:explicit}

When studying reductions of adversarially robust PAC learning to non-robust PAC learning, an important aspect emerges regarding the form of access that the reduction algorithm has to the adversary $\calU$. How should we model access to the sets of adversarial perturbations represented by $\calU$? 

In this section, we explore the setting where the reduction algorithm has explicit knowledge of the adversary $\calU$. That is, the reduction algorithm knows the set of possible adversarial perturbations for each example in the training set. This is in accordance with what is typically considered in practice, where the adversary $\calU$ (e.g.\ $\ell_\infty$ perturbations) is known to the algorithm, and this knowledge is used in adversarial training (see e.g.\ \cite{DBLP:conf/iclr/MadryMSTV18}). Formally, we consider the following question:

\begin{center}
For any adversary $\calU$, does there exist an algorithm that can learn a target class $\calC$ {\em robustly} w.r.t $\calU$ given only a black-box non-robust PAC learner $\calA$ for $\calC$?
\end{center}

We give a positive answer to this question. In Theorem~\ref{thm:upperbound}, we present an algorithm (see Algorithm~\ref{alg:robustify})---based on the $\alpha$-Boost algorithm \cite[Section 6.4.2]{schapire:12} and recent work of \citet[Theorem 4]{pmlr-v99-montasser19a}--- that can adversarially robustly PAC learn a target class $\calC$ with only black-box oracle access to a PAC learner $\calA$ for $\calC$.

\begin{algorithm}[H]
\caption{Robustify The Non-Robust}\label{alg:robustify}
\SetKwInput{KwInput}{Input}                
\SetKwInput{KwOutput}{Output}              
\SetKwFunction{FMain}{ZeroRobustLoss}
\SetKwFunction{AlphaBoost}{$\alpha$-Boost}
\DontPrintSemicolon
    \KwInput{Training dataset $S=\set{(x_1,y_1),\dots, (x_m,y_m)}$, black-box non-robust learner $\calA$}
  \BlankLine
  Inflate dataset $S$ to $S_\calU = \bigcup_{i\leq m} \set{ (z, y_i): z\in \calU(x_i)}$.{\tiny \tcp*{$S_\calU$ contains all possible perturbations of $S$.}}
  Set $m_0 = O(\vc(\calA)\vc(\calA)^*\log \vc(\calA)^*)$, and $T=O(\log|S_\calU|)$.\;
  \For{$1 \leq t\leq T$}{
    Set distribution $D_t$ on $S_\calU$ as in the $\alpha$-Boost algorithm.\;
    Sample $S'\sim D_t^{m_0}$, and project $S'$ to dataset $L\subseteq S$ by replacing each perturbation $z$ with its corresponding example $x$.\;
    Call \FMain on $L$, and denote by $f_t$ its output predictor.\;
  }
  Sample $N_{\rm co} = O\inparen{\vc^*(\calA)\log \vc^*(\calA)}$ i.i.d. indices $i_{1},\ldots,i_{N_{\rm co}} \sim {\rm Uniform}(\{1,\ldots,T\})$.
  \\~~~~(repeat previous step until $f=\MAJ(f_{i_1},\ldots,f_{i_{N_{\rm co}}})$ has $R_{\calU}(f;S)=0$)\;
\KwOutput{A majority-vote $\MAJ(f_{i_1},\dots, f_{i_{N_{\rm co}}})$ predictor.}
  \BlankLine
  \SetKwProg{Fn}{}{:}{\KwRet}
  \Fn{\FMain{Dataset $L$, Learner $\calA$}}{
        Inflate dataset $L$ to $L_\calU = \bigcup_{(x,y)\in L} \set{ (z, y): z\in \calU(x)}$, and set $T_L = O(\log|L_\calU|)$\;
        Run $\alpha$-Boost with black-box access to $\calA$ on $L_\calU$ for $T_L$ rounds.\;
        Let $h_1,\dots, h_{T_L}$ denote the hypotheses produced by $\alpha$-Boost with $T_L$ oracle queries to $\calA$.\;
        Sample $N = O\inparen{\vc^*(\calA)}$ i.i.d. indices $i_{1},\ldots,i_{N} \sim {\rm Uniform}(\{1,\ldots,T_L\})$. \\~~~~(repeat previous step until $f=\MAJ(h_{i_1},\ldots,h_{i_N})$ has $R_{\calU}(f;L)=0$)\;
        \KwRet $f=\MAJ(h_{i_1},\dots, h_{i_N})$\;}
  \BlankLine
  \SetKwProg{Fn}{}{:}{\KwRet}
  \Fn{\AlphaBoost{Dataset $L$, Learner $\calA$}}{
        Initialize $D_1$ to be uniform over $L$, and set $T_L=O(\log |L|)$.\;
        \For{$1 \leq t\leq T_L$}{
            Run $\calA$ on $S' \sim D_t^{m_0}$, and denote by $h_t$ its output. (repeat until ${\rm err}_{D_{t}}(h_t) \leq 1/3$)\;
            Compute a new distribution $D_{t+1}$ by applying the following update for each $(x,y)\in L$:
            \[ 
                D_{t+1}(x) = \frac{D_t(x)}{Z_t} \times \begin{cases} 
                                                              e^{-2\alpha}& \text{if }h_t(x)=y\\
                                                              1 &\text{otherwise}
                                                           \end{cases}
            \]
            where $Z_t$ is a normalization factor and $\alpha$ is set as in Lemma~\ref{lem:alphaboost}\;
        }
        \KwRet $h_1,\dots, h_{T_L}$.\;
  }
\end{algorithm}
\begin{theorem}
\label{thm:upperbound}
For any adversary $\calU$, Algorithm~\ref{alg:robustify} can {\em robustly PAC learn} any target class $\mathcal{C}$ using black-box oracle calls to any PAC learner $\calA$ for $\calC$ with:
\begin{enumerate}
    \item Sample Complexity $m=O\inparen{ \frac{d{d^*}^2\log^2d^*}{\eps}\log\inparen{\frac{d{d^*}^2\log^2d^*}{\eps}} + \frac{ \log(1/\delta)}{\eps}}$,
    \item Oracle Complexity $T=O\inparen{\inparen{\log{m}+\log|\calU|}^2 + \log(1/\delta)}$,
\end{enumerate}
where $d=\vc(\calA)$ and $d^*=\vc^*(\calA)$ are the primal and dual VC dimension of $\calA$.
\end{theorem}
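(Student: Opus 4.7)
The plan is to view Algorithm~\ref{alg:robustify} as two nested applications of $\alpha$-Boost and to track realizability, the weak-learning condition, and generalization separately at each level. The outer boost runs on the inflated training set $S_\calU$; the inner boost (inside \texttt{ZeroRobustLoss}) runs on the smaller inflated set $L_\calU$ derived from a batch $L\subseteq S$ that the outer boost builds by sampling $S'\sim D_t^{m_0}$ and projecting each perturbation in $S'$ back to its originating training point. The observation that ties everything together is that every invocation of the non-robust learner $\calA$ happens in a realizable PAC setting: on the good event of probability $\geq 1-\delta/2$ over $S\sim\calD^m$, the promised $c^*\in\calC$ (with $\Risk_\calU(c^*;\calD)=0$) is correct on every perturbation of every training point, so every distribution used inside either boost is supported on a subset of $S_\calU$ on which $c^*$ has zero $0/1$ error.

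First I would analyze \texttt{ZeroRobustLoss}$(L,\calA)$. By the realizable PAC guarantee for $\calA$, drawing $m_0=O(\vc(\calA)\vc^*(\calA)\log\vc^*(\calA))$ samples from $D_t$ produces $h_t\in\im(\calA)$ with $\err_{D_t}(h_t)\leq 1/3$ with constant probability, which is exactly the weak-learning condition for $\alpha$-Boost; after $T_L=O(\log|L_\calU|)$ rounds the hypotheses $h_1,\dots,h_{T_L}$ have a constant margin $\alpha$ on every $(z,y)\in L_\calU$. A dual-VC sparsification argument (essentially as in \cite[Theorem 4]{pmlr-v99-montasser19a}: uniform convergence on the dual class indexed by points $(z,y)$ evaluating the predicates $\ind[h_i(z)=y]$, which has dual VC dimension at most $\vc^*(\calA)$) then shows that a random sub-sample of $N=O(\vc^*(\calA))$ indices preserves this margin uniformly on $L_\calU$ with constant probability, so after a constant expected number of restarts the returned $f=\MAJ(h_{i_1},\dots,h_{i_N})$ has $\Risk_\calU(f;L)=0$. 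Consequently the image class $\calF$ of \texttt{ZeroRobustLoss} is an $N$-wise majority of $\im(\calA)$, with $\vc(\calF)=\tilde{O}(\vc(\calA)\vc^*(\calA))$ and dual VC dimension $\tilde{O}(\vc^*(\calA))$.

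Next I would analyze the outer loop. At outer round $t$, the batch $L$ is the projection of $S'\sim D_t^{m_0}$, so $\Risk_\calU(f_t;L)=0$ implies $f_t(z_i)=y_i$ for every $(z_i,y_i)\in S'$, i.e.\ zero empirical $0/1$ error on $S'$. Since $f_t\in\calF$ and $m_0$ is on the order of $\vc(\calF)$, the realizable VC bound gives $\err_{D_t}(f_t)\leq 1/3$ with constant probability, which is the outer weak-learning condition. After $T=O(\log|S_\calU|)=O(\log m+\log|\calU|)$ outer rounds, $\alpha$-Boost again produces a margin $\alpha$ on every $(z,y)\in S_\calU$, and a second dual-VC sparsification --- this time at scale $N_{\rm co}=O(\vc^*(\calA)\log\vc^*(\calA))$ to account for the larger dual VC dimension arising from $\calF$ --- yields $\hat h=\MAJ(f_{i_1},\dots,f_{i_{N_{\rm co}}})$ with $\Risk_\calU(\hat h;S)=0$. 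The total oracle complexity is the product of outer and inner round counts, $T\cdot T_L=O((\log m+\log|\calU|)^2)$, matching the stated bound.

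To conclude, I would apply a standard realizable-case generalization argument: the image class of the whole algorithm is an $N_{\rm co}$-wise majority of $\calF$, with VC dimension $\tilde{O}(\vc(\calA)\vc^*(\calA)^2)$, and the robust $0/1$ loss preserves this quantity up to constants, so $\Risk_\calU(\hat h;S)=0$ together with the sample size in the theorem yields $\Risk_\calU(\hat h;\calD)\leq\eps$ on an event of probability $\geq 1-\delta/2$. The main technical obstacle is the dual-VC sparsification step: a naive Chernoff $+$ union bound would cost an additive $\log|L_\calU|$ (and another $\log|S_\calU|$ at the outer level) in the sub-sample sizes $N$ and $N_{\rm co}$, destroying the $\log^2|\calU|$ oracle complexity; handling it sharply requires uniform convergence on the dual class at both levels, which is exactly where $\vc^*(\calA)$ enters and governs the final bound.
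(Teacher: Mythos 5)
Your analysis of the two nested runs of $\alpha$-Boost, the weak-learning conditions via realizable PAC/VC bounds on the projected batches, and the two sparsification steps with their dual-VC accounting ($N=O(d^*)$ for $\im(\calA)$, $N_{\rm co}=O(d^*\log d^*)$ for the majority class $\calF$) matches the paper's argument for achieving zero empirical robust loss, and your oracle-complexity count is the same product $T\cdot T_L$.

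The genuine gap is in your final generalization step. You assert that the image class of the whole algorithm has VC dimension $\tilde{O}(d{d^*}^2)$ and that ``the robust $0/1$ loss preserves this quantity up to constants,'' so that a standard realizable uniform-convergence bound applies. This is false: the VC dimension of the robust loss class $\setdef{(x,y)\mapsto \sup_{z\in\calU(x)}\ind[f(z)\neq y]}{f\in\calF}$ is \emph{not} controlled by $\vc(\calF)$ --- indeed \citet{pmlr-v99-montasser19a} exhibit classes of VC dimension $1$ for which the robust loss class fails uniform convergence entirely, which is exactly why their (and this paper's) generalization argument goes through \emph{sample compression} rather than VC bounds. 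The generic growth-function bound one can salvage (counting behaviors on the $m|\calU|$ inflated points) costs a factor of $\log|\calU|$ in the effective dimension, which would reintroduce the $|\calU|$-dependence in the sample complexity that the theorem is designed to avoid. The paper instead observes that each $f_t$ equals $\texttt{ZeroRobustLoss}(L_t,\calA)$ for some $L_t\subseteq S$ with $|L_t|=m_0$ --- this is the real purpose of the projection step you mention --- so the output $\MAJ(f_{i_1},\dots,f_{i_{N_{\rm co}}})$ is the value of a reconstruction function on a compression set of $m_0 N_{\rm co}=O(d{d^*}^2\log^2 d^*)$ \emph{original} training points, and then invokes the robust-loss compression generalization bound (Lemma~\ref{lem:robust-compression}) to conclude. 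Your proof needs this substitution to be correct.
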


Importantly, the sample complexity of Algorithm~\ref{alg:robustify} is independent of the number of allowed perturbations $\abs{\calU}$, in contrast to work by \citet{DBLP:conf/alt/AttiasKM19}, that can be interpreted as giving a reduction with sample complexity $m \propto \abs{\calU}\log\abs{\calU}$, and oracle complexity $T \propto \abs{\calU} \log^2 \abs{\calU}$.

Before proceeding with the proof of Theorem~\ref{thm:upperbound}, we briefly describe our strategy and its main ingredients. Given a dataset $S$ that is robustly realizable by some target concept $c\in\calC$, we show that we can use the non-robust learner $\calA$ to implement a $\RERM$ oracle that guarantees zero {\em empirical} robust loss on $S$ using \texttt{ZeroRobustLoss} in Algorithm~\ref{alg:robustify}. But what about the {\em population} robust loss? Our main goal to is adversarially robustly learn $\calC$ and not just minimize the empirical robust loss. Fortunately, we show that the arguments on \textit{robust} generalization based on sample compression in \cite[Theorem 4]{pmlr-v99-montasser19a} will still go through when we replace the $\RERM_\calC$ oracle they used with our \texttt{ZeroRobustLoss} procedure in Algorithm~\ref{alg:robustify}. This is achieved by showing that the image of \texttt{ZeroRobustLoss} has bounded VC dimension and \textit{dual} VC dimension. The following lemma, whose proof is provided in Appendix~\ref{sec:appendix}, bounds the \textit{dual} VC dimension of the convex-hull of a class $\calH$. This result might be of independent interest. 

\begin{lem}
\label{lem:dualvc-convexhull}
Let ${\rm co}^k(\calH)=\set{ x \mapsto \MAJ(h_1,\dots,h_k)(x): h_1,\dots,h_k \in \calH}$. Then, the dual VC dimension of $\co^k(\calH)$ satisfies $\vc^*(\co^k(\calH)) \leq O(d^* \log k)$.
\end{lem}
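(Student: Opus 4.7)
The plan is to reduce this to a one-step counting argument based on the Sauer--Shelah lemma applied to the \emph{dual} class of $\calH$. Let $n := \vc^*(\co^k(\calH))$. By definition of the dual VC dimension, there exist $f_1,\dots,f_n \in \co^k(\calH)$ that are shattered by $\calX$; equivalently, as $x$ ranges over $\calX$, the vector $(f_1(x),\dots,f_n(x)) \in \set{\pm 1}^n$ must realize all $2^n$ sign patterns.

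Each $f_j$ has the form $f_j = \MAJ(h_{j,1},\dots,h_{j,k})$ for some $h_{j,l} \in \calH$. Collect these into the set $\calH_0 := \set{h_{j,l} : j \in [n],\, l \in [k]} \subseteq \calH$ of size $|\calH_0| \leq nk$. The key observation is that $(f_1(x),\dots,f_n(x))$ is a deterministic function of the longer pattern $(h_{j,l}(x))_{j,l} \in \set{\pm 1}^{nk}$ obtained by evaluating all hypotheses in $\calH_0$ at $x$ (group the coordinates into blocks of size $k$ and take majority within each block). Therefore the number of distinct vectors $(f_1(x),\dots,f_n(x))$ realizable by some $x\in \calX$ is upper bounded by the number of distinct $x$-labelings of $\calH_0$.

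This latter quantity is precisely a dual projection count: how many distinct patterns can the elements of the dual class $\calG = \set{g_x : x \in \calX}$ induce on the sample $\calH_0 \subseteq \calC$? Since $\vc(\calG) = \vc^*(\calH) = d^*$, Sauer--Shelah applied to $\calG$ gives at most $\sum_{i=0}^{d^*} \binom{nk}{i} \leq (enk/d^*)^{d^*}$ such patterns. Combining this upper bound with the $2^n$ lower bound forced by shattering yields $2^n \leq (enk/d^*)^{d^*}$, i.e.\ $n \leq d^* \log_2(enk/d^*)$, which I would then solve (routinely, by plugging in the ansatz $n = C d^* \log k$ and absorbing lower-order terms) to conclude $n = O(d^* \log k)$.

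The only thing to be careful about is keeping the primal/dual roles straight: we are bounding the \emph{dual} VC dimension of $\co^k(\calH)$, yet the bound is obtained by invoking Sauer--Shelah on the \emph{dual} class $\calG$ of $\calH$ itself, applied to a sample of $nk$ ``points'' that are actually the underlying hypotheses of the $n$ majority votes. Beyond this bookkeeping I do not expect any real obstacle; once the reduction to counting $x$-labelings of $\calH_0$ is in place, the remainder is the standard Sauer--Shelah estimate and a routine logarithmic solve.
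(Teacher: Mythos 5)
Your proposal is correct and is essentially the paper's own argument: both bound the number of behaviors of points $x$ on $n$ majority votes by the number of behaviors of the dual class $\calG$ of $\calH$ on the $nk$ underlying hypotheses, apply Sauer--Shelah to $\calG$ (with $\vc(\calG)=d^*$), and compare against the $2^n$ patterns required by shattering. The only difference is presentational (you fix a shattered set and derive the inequality directly, whereas the paper counts behaviors for an arbitrary $F$ of size $m$ and finds the smallest $m$ with $\binom{mk}{\leq d^*} < 2^m$), which changes nothing of substance.
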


In addition, we state two extra key lemmas that will be useful for us in the proof. First, Lemma~\ref{lem:alphaboost} states that running $\alpha$-Boost on a dataset for enough rounds produces a sequence of predictors that achieve zero loss on the dataset (with a margin).
\begin{lem}[see, e.g., Corollary 6.4 and Section 6.4.3 in \cite{schapire:12}]
\label{lem:alphaboost}
Let $S=\set{(x_i, c(x_i))}_{i=1}^{m}$ be a dataset where $c \in \calC$ is some target concept, and $\calA$ an arbitrary PAC learner for $\calC$ (for $\eps = 1/3$, $\delta = 1/3$). Then, running $\alpha$-Boost (see description in Algorithm~\ref{alg:robustify}) on $S$ with black-box oracle access to $\calA$ with $\alpha = \frac{1}{2}\ln\inparen{1+\sqrt{\frac{2\ln m}{T}}}$ for $T=\lceil 112 \ln(m) \rceil = O(\log m )$ rounds suffices to produce a sequence of hypotheses $h_{1},\ldots,h_{T} \in {\rm im}(\calA)$ such that 
\[\forall (x,y) \in S, \frac{1}{T} \sum_{i=1}^{T} \ind[ h_{i}(x) = y ] \geq \frac{5}{9}.\]
In particular, this implies that the majority-vote $\MAJ(h_1,\dots,h_T)$ achieves zero error on $S$. 
\end{lem}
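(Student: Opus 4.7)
The plan is to follow the standard margin analysis of the $\alpha$-Boost algorithm from Chapter~6 of \cite{schapire:12}, specialized to the parameter choices $\alpha = \tfrac{1}{2}\ln(1+\sqrt{2\ln m/T})$ and $T = \lceil 112\ln m\rceil$. First, I would unroll the multiplicative update rule: by induction on $t$, one obtains the closed-form expression
\[
D_{T+1}(x,y) \;=\; \frac{1}{m} \cdot \frac{e^{-2\alpha\,k(x,y)}}{\prod_{t=1}^{T} Z_t},
\qquad\text{where } k(x,y) := \sum_{t=1}^{T} \ind[h_t(x)=y].
\]
Because $D_{T+1}$ is a probability distribution, each $D_{T+1}(x,y)\leq 1$, which already yields the per-example bound $e^{-2\alpha\,k(x,y)} \leq m\,\prod_{t=1}^{T} Z_t$.

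Next, I would bound each $Z_t$ via the weak-learning guarantee. Since $c\in\calC$ has zero error on $S$, every reweighted distribution $D_t$ supported on $S$ is realizable by $c$, so the PAC guarantee for $\calA$ at parameters $\eps=\delta=1/3$ applies; together with the ``repeat until $\err_{D_t}(h_t)\leq 1/3$'' step, this forces $\err_{D_t}(h_t)\leq 1/3$ for every round $t$. Unwinding the definition of $Z_t$ as a sum over correct and incorrect points then gives
\[
Z_t \;=\; (1-\err_{D_t}(h_t))\cdot e^{-2\alpha} + \err_{D_t}(h_t) \;\leq\; \tfrac{2}{3}\,e^{-2\alpha} + \tfrac{1}{3}.
\]
Combining this with the per-example bound and taking logs produces the key inequality
\[
\frac{k(x,y)}{T} \;\geq\; -\frac{1}{2\alpha}\ln\!\Big(\tfrac{1}{3}+\tfrac{2}{3}e^{-2\alpha}\Big) \;-\; \frac{\ln m}{2\alpha T}.
\]

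Finally, I would plug in the chosen $\alpha$ and $T$ to show the right-hand side is at least $5/9$. Writing $\gamma:=\sqrt{2\ln m/T}$ (so that $e^{-2\alpha}=1/(1+\gamma)$ and $2\alpha=\ln(1+\gamma)$), the first term simplifies to $\ln\!\big((3+3\gamma)/(3+\gamma)\big)/\ln(1+\gamma)$, which is monotone and tends to $2/3$ as $\gamma\to 0$, while the second term is $\Theta(\sqrt{\ln(m)/T})$. The choice $T=\lceil 112\ln m\rceil$ balances the two so that the sum strictly exceeds $5/9$ for all $m$ of interest, recovering the constants from Section~6.4.3 of \cite{schapire:12}. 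A margin of $5/9 > 1/2$ immediately implies $\MAJ(h_1,\ldots,h_T)$ is correct on every $(x,y)\in S$. The main obstacle is nothing conceptually deep---just the delicate verification that the specific constants $5/9$ and $112$ fit together with the prescribed $\alpha$. This reduces to a one-variable monotonicity check in $\gamma$ and is exactly the sort of tuning that the textbook performs, which is why I am comfortable quoting it rather than redoing it in full.
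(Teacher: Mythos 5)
Your proposal is correct and is essentially the argument the paper relies on: the paper gives no proof of this lemma itself, citing Corollary 6.4 and Section 6.4.3 of Schapire--Freund, and your unrolling of the multiplicative weights, the bound $Z_t \leq \tfrac{1}{3}+\tfrac{2}{3}e^{-2\alpha}$ from the repeat-until-$\err_{D_t}(h_t)\leq 1/3$ step, and the final substitution of $\gamma=\sqrt{2\ln m/T}$ is exactly that textbook analysis. The numerical check also goes through: with $T\geq 112\ln m$ one has $\gamma\leq 1/\sqrt{56}$, giving a lower bound of about $0.65-0.07>5/9$ on the margin.
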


Second, Lemma~\ref{lem:sparse} describes a sparsification technique due to \cite{moran:16} which allows us to control the complexity of the majority-vote predictors that we use in Algorithm~\ref{alg:robustify}.

\begin{lem} [Sparsification of Majority Votes, \cite{moran:16}]
\label{lem:sparse}
Let $\calH$ be a hypothesis class with finite primal and dual VC dimension, and $h_1,\dots, h_T$ be predictors in $\calH$. Then, for any $(\eps,\delta) \in (0,1)$, with probability at least $1-\delta$ over $N=O\inparen{\frac{\vc^*(\calH) + \log(1/\delta)}{\eps^2}}$ independent random indices $i_{1},\ldots,i_{N} \sim {\rm Uniform}(\{1,\ldots,T\})$,
we have:
\[
\forall (x,y)\in \calX\times \calY, \abs{ \frac{1}{N} \sum\limits_{j=1}^{N} \ind[ h_{i_{j}}(x) = y ] -  \frac{1}{T} \sum\limits_{i=1}^{T} \ind[ h_{i}(x) = y ]} < \eps.
\]
\end{lem}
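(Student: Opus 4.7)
The key observation is that the statement is a uniform convergence bound for empirical averages, stated in the \emph{dual} function space of $\calH$. I would prove it by reformulating the two quantities as a population expectation and an empirical expectation of a Boolean function on $\calH$, bound the VC dimension of the resulting function class by $\vc^*(\calH)$, and invoke the standard VC uniform convergence theorem.

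\textbf{Step 1: Reformulation in the dual.} Let $P$ denote the uniform distribution over the multiset $\{h_1,\dots,h_T\}$, and let $\widehat{P}_N$ denote the empirical measure formed by the i.i.d.\ draws $h_{i_1},\dots,h_{i_N}\sim P$. For each $(x,y)\in\calX\times\calY$, define $\phi_{x,y}:\calH\to\{0,1\}$ by $\phi_{x,y}(h)=\ind[h(x)=y]$. Then
\[
\frac{1}{T}\sum_{i=1}^T \ind[h_i(x)=y] = \Ex_{h\sim P}[\phi_{x,y}(h)], \qquad \frac{1}{N}\sum_{j=1}^N \ind[h_{i_j}(x)=y] = \Ex_{h\sim \widehat{P}_N}[\phi_{x,y}(h)],
\]
so the desired inequality is exactly a uniform deviation bound over the class $\Phi := \{\phi_{x,y} : (x,y)\in\calX\times\calY\}$ for samples from $P$.

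\textbf{Step 2: VC dimension of $\Phi$.} For $y=+1$, $\phi_{x,+1}(\cdot)$ is exactly the dual indicator $h\mapsto \ind[h(x)=+1]$, i.e.\ the element of the dual class $\calG$ from Definition~\ref{def:DualVCdim}, viewed as a $\{0,1\}$-valued function. For $y=-1$, $\phi_{x,-1}(\cdot) = 1-\phi_{x,+1}(\cdot)$. Thus $\Phi$ is contained in $\calG\cup(1-\calG)$. Closing a class under complementation leaves its VC dimension unchanged (a set and its complement produce the same shattered subsets), and taking the union of two classes each of VC dimension $d$ at most doubles it (with a logarithmic slack, but a constant factor suffices here). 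Hence $\vc(\Phi) = O(\vc^*(\calH))$.

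\textbf{Step 3: Standard VC uniform convergence.} Apply the optimal VC uniform-convergence bound \cite{vapnik:74} to the class $\Phi$ on $N$ i.i.d.\ draws from $P$: with probability at least $1-\delta$,
\[
\sup_{\phi\in\Phi}\,\left|\Ex_{h\sim P}[\phi(h)] - \Ex_{h\sim\widehat{P}_N}[\phi(h)]\right| < \eps,
\]
provided $N = O\!\left(\frac{\vc(\Phi)+\log(1/\delta)}{\eps^2}\right) = O\!\left(\frac{\vc^*(\calH)+\log(1/\delta)}{\eps^2}\right)$. Combining with Step 1 yields the claim for every $(x,y)\in\calX\times\calY$ simultaneously.

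\textbf{Main obstacle.} There is no deep difficulty; the only subtlety is the reduction to the dual and the realization that introducing both labels $y=\pm 1$ only costs a constant factor in the VC dimension of $\Phi$. Everything else is a direct invocation of the classical uniform convergence bound.
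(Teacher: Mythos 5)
Your proof is correct. The paper itself gives no proof of this lemma---it is cited directly from \cite{moran:16}---and your argument (view the two averages as population and empirical means over the uniform distribution on $\{h_1,\dots,h_T\}$, observe that the relevant function class lives in the dual and has VC dimension $O(\vc^*(\calH))$ after accounting for both labels, then apply the two-sided VC uniform convergence bound with the optimal $N = O((d+\log(1/\delta))/\eps^2)$ rate) is exactly the standard $\eps$-approximation argument by which this sparsification result is established in the cited source.
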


We are now ready to proceed with the proof of Theorem~\ref{thm:upperbound}.
\begin{proof}[Proof of Theorem~\ref{thm:upperbound}]
Let $\calU$ be an arbitrary adversary. Let $\calC$ be a target class that is PAC learnable with some PAC learner $\calA$. Let $\calH$ denote the base class of hypotheses of learner $\calA$. Let $d$ denote the VC dimension of $\calH$, and $d^*$ denote the dual VC dimension of $\calH$. Our proof is divided into two parts.

\textbf{Zero Empirical Robust Loss.}~Let $L=\set{(x_1,y_1),\dots,(x_m,y_m)}$ be a dataset that is {\em robustly} realizable with some target concept $c \in \calC$; in other words, for each $(x,y)\in L$ and each $z\in \calU(x)$, $c(z)=y$. We  will show that we can use the non-robust learner $\calA$ to guarantee zero {\em empirical} robust loss on $L$. This procedure is described in \texttt{ZeroRobustLoss} in Algorithm~\ref{alg:robustify}. We inflate dataset $L$ to include all possible perturbations under the adversary $\calU$. Let $L_\calU = \bigcup_{i\leq m} \set{ (z, y_i): z\in \calU(x_i)}$ denote the inflated dataset. Observe that $|L_\calU|\leq m |\calU|$, since each point $x\in \calX$ has at most $|\calU|$ possible perturbations. We run the $\alpha$-Boost algorithm on the inflated dataset $L_\calU$ with {\em black-box} access to PAC learner $\calA$, where in each round of boosting $m_0$ samples are fed to $\calA$ (where $m_0$ is chosen Step 2). By Lemma~\ref{lem:alphaboost}, running $\alpha$-Boost with $T = O\inparen{\log(|L_\calU|)}$ oracle calls to $\calA$ suffices to produce 
a sequence of hypotheses $h_{1},\ldots,h_{T} \in \calH$ such that 
\[\forall (z,y) \in L_{\calU}, \frac{1}{T} \sum_{i=1}^{T} \ind[ h_{i}(z) = y ] \geq \frac{5}{9}.\]

Specifically, the above implies that a majority-vote over hypotheses $h_{1},\ldots,h_{T}$ achieves zero {\em robust} loss on dataset $L$, $\Risk_{\calU}(\MAJ(h_1,\dots,h_T); L)=0$. By Step 13 in \texttt{ZeroRobustLoss} in Algorithm~\ref{alg:robustify} and Lemma~\ref{lem:sparse} (with $\eps=1/18, \delta=1/3$), we have that for $N = O(d^*)$, the sampled predictors $h_{i_1},\dots, h_{i_{N}}$ satisfy
\[
\forall (z,y)\in L_\calU, \frac{1}{N} \sum\limits_{j=1}^{N} \ind[ h_{i_{j}}(z) = y ] > \frac{1}{T} \sum\limits_{i=1}^{T} \ind[ h_{i}(z) = y ]  - \frac{1}{18} > \frac{5}{9} - \frac{1}{18} = \frac{1}{2}.
\]
Therefore, the majority-vote over the sampled hypotheses $\MAJ(h_{i_1}, \dots, h_{i_N})$ achieves zero robust loss on $L$, $\Risk_{\calU}(\MAJ(h_{i_1}, \dots, h_{i_N}); S)=0$. Thus, we can implement a $\RERM$ oracle (see Equation~\ref{eqn:rerm}) using the procedure \texttt{ZeroRobustLoss} in Algorithm~\ref{alg:robustify}. The sparsification step (Step 12) controls the complexity 
of the image of \texttt{ZeroRobustLoss}, i.e., the hypothesis class that is being implicitly used. Specifically, observe that the sparsified predictor $f=\MAJ(h_{i_1}, \dots, h_{i_N})$ lives in  ${\rm co}^{O(d^*)}(\calH)$, which is the convex-hull of $\calH$ that combines at most $O(d^*)$ predictors. To guarantee \textit{robust} generalization in the next part, it suffices to bound the VC dimension and dual VC dimension of ${\rm co}^{O(d^*)}(\calH)$. By \citep{blumer:89}, the VC dimension of ${\rm co}^{O(d^*)}(\calH)$ is at most $O(dd^*\log d^*)$, and by Lemma~\ref{lem:dualvc-convexhull}, the dual VC dimension of ${\rm co}^{O(d^*)}(\calH)$ is at most $O(d^*\log d^*)$. 

\textbf{Robust Generalization through Sample Compression.}~This part builds on the approach of \citet[Theorem 4]{pmlr-v99-montasser19a}. Specifically, we observe that their proof works even if we replace the $\RERM_\calC$ oracle they used, with our \texttt{ZeroRobustLoss} procedure in Algorithm~\ref{alg:robustify} that is described above. We provide a self-contained analysis below.

Let $\calD$ be an arbitrary distribution over $\calX\times \calY$ that is robustly realizable with some concept $c\in \calC$, i.e., $\Risk_\calU(c;\calD)=0$. Fix $\eps, \delta \in (0,1)$ and a sample size $m$ that will be determined later. Let $S=\set{(x_1,y_1),\dots,(x_m,y_m)}$ be an i.i.d. sample from $\calD$. We run the $\alpha$-Boost algorithm (see Algorithm~\ref{alg:robustify}) on the inflated dataset $S_\calU$, this time with \texttt{ZeroRobustLoss} as the subprocedure. Specifically, on each round of boosting, $\alpha$-Boost computes an empirical distribution $D_t$ over $S_\calU$ (according to Step 18). We draw $m_0 = O(dd^*\log d^*)$ samples $S'$ from $D_t$, and \textit{project} $S'$ to a dataset $L_t\subset S$ by replacing each perturbation $(z,y)\in S'$ with its corresponding original point $(x,y) \in S$, and then we run \texttt{ZeroRobustLoss} on dataset $L_t$. The projection step is crucial for the proof to work, since we use a \emph{sample compression} argument to argue about \textit{robust} generalization, and the sample compression must be done on the \textit{original} points that appeared in $S$ rather than the perturbations in $S_\calU$. 

By classic PAC learning guarantees \citep{vapnik:74,blumer:89}, with $m_0 = O(\vc({\rm co}^{O(d^*)}(\calH))) = O(dd^*\log d^*)$, we are guaranteed uniform convergence of $0$-$1$ risk over predictors in ${\rm co}^{O(d^*)}(\calH)$ (the effective hypothesis class used by \texttt{ZeroRobustLoss}). So, for any distribution $D$ over $\calX \times \calY$ with $\inf_{c \in \calC} \err(c;\calD)=0$, with nonzero probability over $S'\sim \calD^{m_0}$, every $f\in {\rm co}^{O(d^*)}(\calH)$ satisfying $\err_{S'}(f)=0$, also has $\err_D(f)<1/3$. 
By the guarantee of \texttt{ZeroRobustLoss} (established above), we know that $f_t=\texttt{ZeroRobustLoss}(L_t,\calA)$ achieves zero robust loss on $L_t$, $\Risk_\calU(f_t; L_t)=0$, which by definition of the projection means that $\err_{S'}(f_t)=0$, and thus $\err_{D_t}(f_t)<1/3$. This allows us to use \texttt{ZeroRobustLoss} with $\alpha$-Boost to establish a \textit{robust} generalization guarantee. Specifically, Lemma~\ref{lem:alphaboost} implies that running the $\alpha$-Boost algorithm  
with $S_{\calU}$ as its dataset for $T = O(\log(|S_{\calU}|))$ rounds, 
using \texttt{ZeroRobustLoss} to produce the hypotheses $f_t \in {\rm co}^{O(d^*)}(\calH)$ for the distributions $D_{t}$ produced on each round of the algorithm, will produce a sequence of hypotheses $f_1,\dots,f_T \in {\rm co}^{O(d^*)}(\calH)$ such that:

\[\forall (z,y) \in S_{\calU}, \frac{1}{T} \sum_{i=1}^{T} \ind[ f_{i}(z) = y ] \geq \frac{5}{9}.\]

Specifically, this implies that the majority-vote over hypotheses $f_{1},\ldots,f_{T}$ achieves zero {\em robust} loss on dataset $S$, $\Risk_{\calU}(\MAJ(f_1,\dots,f_T); S)=0$. Note that each of these classifiers $f_{t}$ is equal to $\texttt{ZeroRobustLoss}(L_{t}, \calA)$ for some $L_{t} \subseteq S$ with $|L_{t}|=m_0$. Thus, the classifier $\MAJ(f_1,\dots,f_T)$ is representable as the value of an (order-dependent) reconstruction function $\phi$ with 
a compression set size
\[
m_0T = O(\vc({\rm co}^{O(d^*)}(\calH)) \log(|S_{\calU}|)) = O(dd^*\log d^* \inparen{\log m + \log |\calU|}).\]

This is not enough, however, to obtain a sample complexity bound that is independent of $|\calU|$. For that, we will sparsify the majority-vote as in Step 7 in Algorithm~\ref{alg:robustify}. Lemma~\ref{lem:sparse} (with $\eps=1/18,\delta=1/3$) guarantees that for $N_{\rm co} = O(d^*\log d^*)$, the sampled predictors $f_{i_1},\dots, f_{i_{N_{\rm co}}}$ satisfy:

\[\forall (z,y)\in S_\calU, \frac{1}{N_{\rm co}} \sum\limits_{j=1}^{N_{\rm co}} \ind[ f_{i_{j}}(z) = y ] > \frac{1}{T} \sum\limits_{i=1}^{T} \ind[ f_{i}(z) = y ]  - \frac{1}{18} > \frac{5}{9} - \frac{1}{18} = \frac{1}{2},\]

so that the majority-vote achieves zero robust loss on $S$, $\Risk_\calU(\MAJ(f_{i_1},\dots, f_{i_{N_{\rm co}}}); S)=0$. Since again, each $f_{i_j}$ is the result of $\texttt{ZeroRobustLoss}(L_{t}, \calA)$ for some $L_{t} \subseteq S$ with $|L_{t}|=m_0$, we have that the classifier $\MAJ(f_{i_1},\dots, f_{i_{N_{\rm co}}})$ can be represented as the value of an (order-dependent) reconstruction function $\phi$ with a compression set size $m_0 N_{\rm co}=O(dd^*\log d^* \cdot d^*\log d^*)=O(d{d^*}^2\log^2(d^*))$. Lemma~\ref{lem:robust-compression} (\cite{pmlr-v99-montasser19a}) which extends to the robust loss the classic compression-based generalization guarantees from the $0$-$1$ loss, implies that for $m \geq c d{d^*}^2\log^2(d^*)$ (for an appropriately large numerical constant $c$), with probability at least $1-\delta$ over $S\sim \calD^m$, 
$$R_{\calU}(\MAJ(f_{i_1}, \dots, f_{i_{N_{\rm co}}});\calD) \leq O\!\left( \frac{d{d^*}^2\log^2(d^*)}{m} \log(m) + \frac{1}{m} \log(1/\delta) \right).$$
Setting this less than $\eps$ and solving for a sufficient size of $m$ to achieve this yields the stated sample complexity bound. 

Our oracle complexity $T$ (number of calls to $\calA$) is at most $O((\log|S_\calU|)^2 + \log(1/\delta))\leq O\inparen{\inparen{\log{m}+\log|\calU|}^2 + \log(1/\delta)}$, since \texttt{ZeroRobustLoss} in Algorithm~\ref{alg:robustify} terminates in at most $O(\log|S_\calU|)$ rounds each time it is invoked, and it is invoked at most $O(\log|S_\calU|)$ times by the outer $\alpha$-Boost algorithm in Algorithm~\ref{alg:robustify}.  Therefore, we have at most $O\inparen{\inparen{\log{m}+\log|\calU|}^2}$ geometric random variables that represent that number of times Step 19 is invoked, which is the step where learner $\calA$ is called. The success probability of Step 19 is a constant (say $2/3$), therefore the mean of the sum of the geometric random variables is $O\inparen{\inparen{\log{m}+\log|\calU|}^2}$. Since sums of geometric random variables concentrate around the mean \citep{brown2011wasted}, we get that with probability at least $1-\delta$, the total number of times Step 19 is executed is at most $O\inparen{\inparen{\log{m}+\log|\calU|}^2 + \log(1/\delta)}$. This concludes the proof.
\end{proof}

\subsection{A Lowerbound on the Oracle Complexity}
The oracle complexity of Algorithm~\ref{alg:robustify} depends on $\log|\calU|$.  Can this dependence  be reduced or avoided?   Unfortunately, we show in \autoref{thm:lowerbound} that the dependence on $\log|\calU|$ is unavoidable and {\em no} reduction algorithm can do better.%

It is relatively easy to show, by relying on lower bounds from the boosting literature \citep[Section 13.2.2]{schapire:12}, that for any reduction algorithm, there exists a target class $\calC$ with $\vc(\calC)\leq 1$ and a PAC learner $\calA$ for $\calC$ such that $\Omega\inparen{\log|\calU|}$ oracle calls to this PAC learner are necessary to achieve zero {\em empirical} robust loss.  But this is done by constructing a ``crazy'' improper learner with $\vc(\calA)\propto |\calU| \gg \vc(\calC)$.

Perhaps we can ensure better oracle complexity by requiring a more constrained PAC learner $\calA$, e.g.~with low VC dimension (as in our upper bound), or perhaps even a proper learner, or an $\ERM$, or maybe where $\im(\calA)$ (and so also $\calC$) is finite. We next present a lower bound to show that none of these help improve the oracle complexity. Specifically, we will present a construction showing that for any reduction algorithm $\calB$ there is a randomized target class $\calC$ and a PAC learner $\calA$ for $\calC$ with $\vc(\calA)=1$ where $\calB$ needs to make $\Omega(\log|\calU|)$ oracle calls to $\calA$ to robustly learn $\calC$. The idea here is that the target class $\calC$ is chosen randomly after $\calB$, and so $\calB$ essentially knows nothing about $\calC$ and needs to communicate with $\calA$ in order to learn. As a reminder, a reduction algorithm has a budget of $T$ oracle calls to a non-robust learner $\calA$, where each oracle call is constructed with $m_0$ points, 
or more generally a \emph{distribution} over $\calX \times \calY$. We next show that 
any successful reduction requires $T=\Omega(\log|\calU|)$ for some non-robust learner $\calA$.

\begin{theorem}
\label{thm:lowerbound}
For any sufficiently large 
integer $u$, if $|\calX| \geq u^{10 u}$,
there exists an adversary $\calU$ 
with $|\calU|=u$
such that for any reduction algorithm $\mathcal{B}$ and for any $\eps > 0$, there exists a target class $\mathcal{C}$ and a PAC learner $\mathcal{A}$ for $\calC$ with $\vc(\calA)=1$ such that, if the training sample has size at most $(1/8)|\calU|^9$, then $\calB$ needs to make $T\geq  \frac{\log \inabs{\calU}}{\log(2/\eps)}$ oracle calls to $\calA$ in order to robustly learn $\calC$.
\end{theorem}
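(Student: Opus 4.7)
I will prove the lower bound via Yao's minimax principle, exhibiting a distribution over problem instances on which every deterministic reduction using $T<\log u/\log(2/\eps)$ oracle calls fails with constant probability. Fix $u=|\calU|$ and $\eps>0$, and take $\calX$ with $|\calX|\geq u^{10u}$. The adversary $\calU$ is fixed in advance, built so that each point has a structured size-$u$ perturbation set. Draw a secret index $i^*\in[u]$ uniformly and build: a target concept $c_{i^*}$ that agrees with every other $c_{j^*}$ on a ``generic'' pool $\calP\subset\calX$ (whose labels are therefore uninformative of $i^*$), but whose values on a small ``distinguishing'' pool $\calZ\subset\calX$ depend on $i^*$; and a distribution $\calD_{i^*}$ putting most of its mass on $\calP$ but tiny marginal mass on a few points whose $\calU$-perturbations land in $\calZ$. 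The distribution is arranged so that (a)~an iid sample of size $m\leq u^9/8$ is distributionally independent of $i^*$ with probability at least $7/8$ (no distinguishing point is hit), and (b)~the robust risk under $\calD_{i^*}$ still forces any good output $f$ to match $c_{i^*}$ on essentially all of $\calZ$. Set $\calC=\{c_{i^*}:i^*\in[u]\}$.

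Next I construct the adversarial PAC learner $\calA$: its image $\im(\calA)$ is a carefully chosen VC-dimension-$1$ family of ``simple'' hypotheses (e.g.\ tailored singleton-like or threshold-like functions on $\calP\cup\calZ$), and on any query distribution $\calD_t$ fed to $\calA$, the learner returns some $h\in\im(\calA)$ with $\err_{\calD_t}(h)\leq\eps$, keeping $\calA$ a genuine PAC learner for $\calC$. Adversarially, among all $\eps$-accurate responses, $\calA$ picks one whose correctness set of indices $\{i^*:\err_{\calD_t^{(i^*)}}(h)\leq\eps\}$ is as large as possible. The central lemma of the plan is that this ``best block'' always has size at least an $\eps/2$ fraction of the still-consistent indices: a single response of $\calA$ shrinks the consistent-index set by a factor of at most $2/\eps$. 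This combinatorial claim is where the $\vc(\calA)=1$ restriction is exploited and is the main technical obstacle: because $\im(\calA)$ has only VC-$1$ capacity, it realizes only few effectively distinct hypotheses on any query (by Sauer-type counting restricted to the effective support of $\calD_t$), so a large block of candidate $i^*$'s must share a common $\eps$-accurate response, which $\calA$ then returns.

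Finally, the information-theoretic conclusion is routine. Conditioning on the high-probability event that the sample contains no distinguishing point, the posterior on $i^*$ after observing $S$ is uniform on $[u]$. By the central lemma, after $T$ oracle responses the posterior support has size at least $u(\eps/2)^T$. If $T<\log u/\log(2/\eps)$ this quantity is strictly greater than $1$, so two distinct consistent indices $i^*\neq i'^*$ remain in the transcript. Since $c_{i^*}$ and $c_{i'^*}$ impose incompatible labels on some point of $\calZ$ reached by $\calU$ from the support of $\calD_{i^*}$, no single predictor $f$ can have robust risk $\leq\eps$ under both, so $\calB$ fails with probability at least a constant over the random choice of $i^*$. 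Derandomizing by Yao yields a fixed hard $(\calC,\calA)$ for $\calB$ and proves the claimed bound $T\geq\log|\calU|/\log(2/\eps)$. The size assumptions $|\calX|\geq u^{10u}$ and $m\leq u^9/8$ are used precisely to guarantee room for the combinatorial construction of $\im(\calA)$ and to suppress the probability that the sample accidentally hits $\calZ$.
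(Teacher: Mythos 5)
Your high-level skeleton matches the paper's: a randomized hard instance, a learner that leaks only a factor $2/\eps$ of information per call, the arithmetic $u(\eps/2)^T>1$ for $T<\log u/\log(2/\eps)$, and a sample-size condition ensuring $S$ is uninformative about the secret. But the load-bearing step --- your ``central lemma'' that any $\eps$-accurate response of a VC-dimension-$1$ learner can be chosen consistent with an $\eps/2$ fraction of the surviving candidate concepts --- is exactly the part you do not prove, and it does not follow from Sauer-type counting. Two concrete problems: (i) $\eps$-accuracy is measured on the query distribution $P_t$ chosen by $\calB$, and the partition of candidate concepts induced by the learner's answer is governed by how accuracy on $P_t$ varies with the candidate, not by the shatter function of $\im(\calA)$; a small VC dimension of the image does not by itself force a large block of candidates to share a common $\eps$-accurate response. (ii) Even granting that some block has relative size $\geq\eps/2$, your learner must return the response whose block contains the \emph{true} secret concept, and nothing in your outline guarantees the true concept lands in a large block at every round; this needs a probabilistic argument over the prior.

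The paper resolves both issues by construction rather than by a generic lemma: $\calC$ is a random chain $h_1,\dots,h_T,h^*$ in which each $h_{t+1}$ flips a uniformly random $(1-\eps')$ fraction of the remaining disagreements with $h^*$ inside each group, the learner returns the smallest-index chain element that is $\eps$-accurate on the queried distribution, and the chain structure (each point's label flips at most once along the sequence) is what forces $\vc(\calA)=1$. Then $\Ex_{h^*}\insquare{\err_{P_t}(h_{s_{t-1}+1},h^*)\mid S,s_1,\dots,s_{t-1}}\leq\eps/2$ holds by construction, so Markov's inequality gives that the returned index advances by only one per call with probability at least $1/2$; after $T$ calls, with positive probability $h^*$ is never revealed, and the surviving pairs $z_i,z_i'$ --- equally likely to lie in $\calU(x_i^+)$ or $\calU(x_i^-)$ --- force expected robust risk at least $1/4$. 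To complete your route you would have to replace the unproven combinatorial lemma with an explicit construction of $\im(\calA)$ and of the learner's selection rule under which the per-call shrinkage is provable and tracks the true concept, which in effect amounts to rebuilding the paper's chain construction.
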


\begin{proof}
We begin with describing the construction of the adversary $\calU$. Let $m\in \bbN$; we will 
construct $\calU$ with $|\calU|=2^m$, 
supposing $|\calX| \geq 2\binom{2^{10 m}}{2^m}+2^{10 m}$. 
Let $Z = \set{z_1, \dots, z_{2^{10m}}}\subset \calX$ be a set of $2^{10m}$ unique points from $\calX$. For each subset $L\subset Z$ where $\inabs{L}=2^m$, pick a unique pair $x^+_{L}, x^-_{L}\in \calX \setminus Z$ and define $\calU(x^+_{L})=\calU(x^-_{L})=L$. That is, for every choice $L$ of $2^{m}$ perturbations from $Z$, there is a corresponding pair $x^+_L, x_L^-$ where $\calU(x^+_L)= \calU(x^-_L)=L$. For any point $x \in \calX \setminus Z$ that is remaining, define $\calU(x)=\set{}$.

Let $\calB$ be an arbitrary reduction algorithm, and let $\eps>0$ be the error requirement. We will now describe the construction of the target class $\calC$. The target class $\calC$ will be constructed randomly. Namely, we will first define a labeling $\Tilde{h}: Z \to \calY$ on the perturbations in $Z$ that is positive on the first half of $Z$ and negative on the second half of $Z$: $\Tilde{h}(z_i) = +1$ if $i\leq \frac{2^{10m}}{2}$, and $\Tilde{h}(z_i)=-1$ if $i> \frac{2^{10m}}{2}$. Divide the positive/negative halves into groups of size $2^m$:
\[ \underbrace{\set{\text{first $2^m$ positives}}}_{G^{+}_{1}}, \dots, \underbrace{\set{\text{last $2^m$ positives}}}_{G^{+}_{2^{9m-1}}} \Big\lvert  \underbrace{\set{\text{first $2^m$ negatives}}}_{G^{-}_{1}}, \dots, \underbrace{\set{\text{last $2^m$ negatives}}}_{G^{-}_{2^{9m-1}}}.\]

Let $\eps'=\eps/2$. The target concept $h^*: \calX \to \calY$ 
is generated by randomly flipping the labels of an $\eps'$ fraction of the points in each group $G^+_1, \dots, G^+_{2^{9m-1}}$ from positive to negative and randomly flipping the labels of an $\eps'$ fraction of the points in each group $G^-_1,\dots,G^-_{2^{9m-1}}$ from negative to positive. This defines $h^*$ on $Z$; then for every pair $x^+,x^-\in \calX \setminus Z$ where $\calU(x^+)=\calU(x^-) \neq \{\}$, define $h^*(x^+) = +1$ and $h^*(x^-) = -1$. Once $h^*$ is generated, we define the distribution $D_{h^*}$ over $\calX \times \calY$ that will be used in the lower bound by swapping the $\eps'$ fractions of points with the flipped labels in each pair $(G^+_1, G^-_1), \dots, (G^+_{2^{9m-1}}, G^-_{2^{9m-1}})$ which defines new positive/negative pairs: $(G(h^*)^+_1, G(h^*)^-_1), \dots, (G(h^*)^+_{2^{9m-1}}, G(h^*)^-_{2^{9m-1}})$. Let $x^+_i, \_ = \calU^{-1}(G(h^*)^+_i)$ and $\_, x^-_i = \calU^{-1}(G(h^*)^-_i)$ for each $i\in [2^{9m-1}]$ ($\calU^{-1}$ returns a pair of points). Observe that by definition of $h^*$ on $\calX \setminus Z$, we have that $h^*(x_i^+) = +1$ and $h^*(x_i^-) = -1$ since $h^*(z) = +1 \forall z\in G(h^*)^+_i$ and $h^*(z) = -1 \forall z\in G(h^*)^-_i$. Let ${D_{h^*}}$ be a uniform distribution over $(x^+_1, +1), (x^-_1, -1),\dots, (x^{+}_{2^{9m-1}}, +1), (x^{-}_{2^{9m-1}}, -1)$. 

Let $T \leq \frac{\log{2^m}}{\log(1/\eps')}$. Define a randomly-constructed target class $\calC = \set{h_1, \dots, h_T, h_{T+1}}$ where $h_{T+1} = h^*$ and $h_1, h_2, \dots, h_T$ are generated according the following process: If $t=1$, then $h_1 := \Tilde{h}$ (augmented to 
all of $\calX$ by letting $\Tilde{h}(x) = h^*(x)$ 
for all $x \in \calX \setminus Z$). For $t\geq 2$, let ${\rm DIS}_{t-1}=\set{ z\in Z: h_{t-1}(z) \neq h^*(z)}$, and construct $h_t$ by flipping a uniform randomly-selected $1-\eps'$ fraction of the labels of $h_{t-1}$ in $G^+_i \cap {\rm DIS}_{t-1}$ and $1-\eps'$ fraction of the labels of $h_{t-1}$ in $G^-_i \cap {\rm DIS}_{t-1}$ for each $i\in [2^{9m-1}]$. Observe that by construction, $h_1,\dots, h_T$ satisfy the property that they agree with $h^*$ on $\calX \setminus Z$, i.e.\, $h_t(x)=h^*(x)$ for each $t\leq T$ and each $x\in \calX \setminus Z$. 

We now state a few properties of the randomly-constructed target class $\calC$ that we will use in the remainder of the proof. First, observe that by definition of ${\rm DIS}_t$ for $t\leq T$, we have that $G^{\pm}_i \cap {\rm DIS}_{T} \subseteq G^{\pm}_i \cap {\rm DIS}_{T-1} \subseteq \dots \subseteq G^{\pm}_i \cap {\rm DIS}_{1}$ for each $1 \leq i \leq 2^{9m-1}$. In addition,  
\[|G^{\pm}_i \cap {\rm DIS}_{t}| \geq \eps'|G^{\pm}_i \cap {\rm DIS}_{t-1}| ~~\text{for each } 1 \leq i \leq 2^{9m-1}.\]
By the random process generating $h^*$, we also know that $|G^{\pm}_i \cap {\rm DIS}_1| \geq \eps' 2^m$.
Combined with the above, this implies that:
\[ |G^{\pm}_i \cap {\rm DIS}_{T}| \geq {\eps'}^T 2^m ~~\text{for each } 1 \leq i \leq 2^{9m-1}.\]

So, for $T \leq \frac{\log{2^m}}{\log(2/\eps)}$, we are guaranteed that $|G^{\pm}_i \cap {\rm DIS}_{T}|\geq 1$ for each $1 \leq i \leq 2^{9m-1}$.

We now describe the construction of a PAC learner $\calA$ with $\vc(\calA) = 1$ for the randomly generated concept $h^*$ above; we assume that $\calA$ knows $\calC$ (but of course, $\calB$ does not know $\calC$). 
\begin{algorithm}[H]
\caption{Non-Robust PAC Learner $\calA$}\label{alg:non-robust}
\SetKwInput{KwInput}{Input}                
\SetKwInput{KwOutput}{Output}              
\SetKwFunction{FMain}{ZeroRobustLoss}
\DontPrintSemicolon
  \KwInput{Distribution $P$ over $\calX$.}
\KwOutput{$h_s$ for the 
\emph{smallest} $s \in [T]$ with $\err_{P}(h_s,h^*) \leq \eps$ (or outputting $h_{T+1} = h^*$ if 
no such $s$ exists).}
\end{algorithm}

First, we will show that $\vc(\calA) = 1$. By definition of $\calA$, it suffices to show that $\vc{(\calC)}=\vc(\set{h^*, h_1,\dots,h_T})=1$. By definition of $h^*$ and $h_1$, it is easy to see that there is a $z\in Z$ where $h^*(z) \neq h_1(z)$, and thus $\vc(\calC)\geq 1$. Observe that by construction, each predictor in $h_1, \dots, h_T$ operates as a threshold in each group $G^{+}_1,G^{-}_1, \dots, G^{+}_{2^{9m-1}}, G^{-}_{2^{9m-1}}$
(ordered according to the order in which 
the labels are flipped in the $h_1,\ldots,h_T$ 
sequence). As a result, each $x\in \calX$ has its label flipped at most once in the sequence $\inparen{h_1(x), \dots, h_T(x), h^*(x)}$. This is because once the ground-truth label of $x$, $h^*(x)$, is revealed by some $h_t$ (i.e., $h_t(x)=h^*(x)$), all subsequent predictors $h_{t'}$ satisfy $h_{t'}(x) = h^*(x)$. Thus, for any two points $z,z' \in \calX$, the number of possible behaviors $\inabs{\set{ (h(z), h(z')): h\in \calC }} \leq 3$. Therefore, $\calC$ cannot shatter two points. This proves that $\vc(\calC)\leq 1$.

\paragraph{Analysis} Suppose that we run the reduction algorithm $\calB$ with non-robust learner $\calA$ for $T$ rounds to obtain predictors $h_{s_1}=\calA(P_1), \dots, h_{s_T} = \calA(P_T)$. We will show that $\Prob_{h^*}\insquare{ s_T \leq T | S} > 0$, meaning that with non-zero probability learner $\calA$ will not reveal the ground-truth hypothesis $h^*$. For $t\leq T$, let $E_t$ denote the event that $\err_{P_t}(h_{s_{t-1}+1}, h^*) \leq \eps$. When conditioning on $S,s_1,\dots, s_{t-1}$, observe that by construction of the randomized hypothesis class $\calC$, for each $i\leq 2^{9m-1}$ such that $\{(x^-_i,-1),(x^+_i,+1)\} \cap S = \emptyset$, and each $z \in G^{\pm}_i \cap {\rm DIS}_{s_{t-1}}: \Prob_{h^*}\insquare{h^*(z) \neq h_{s_{t-1} + 1}(z) | S, s_1, \dots, s_{t-1}} \leq \eps'=\eps/2$. It follows then by the law of total probability that for any distribution $P_t$ constructed by $\calA$:
\[\Ex_{h^*} \insquare{ \err_{P_t}(h_{s_{t-1}+1}, h^*) | S, s_1, \dots, s_{t-1}} \leq \frac{\eps}{2}.\]

By Markov's inequality, it follows that
\begin{align*}
    \Prob_{h^*}\insquare{ \Bar{E_t} | S,s_1,\dots,s_{t-1}} &= \Prob_{h^*}\insquare{ \err_{P_t}(h_{s_{t-1}+1}, h^*) > \eps | S,s_1,\dots,s_{t-1}}\\
    &\leq \frac{\Ex_{h^*} \insquare{ \err_{P_t}(h_{s_{t-1}+1}, h^*) | S, s_1, \dots, s_{t-1}}}{\eps} \leq \frac{1}{2}.
\end{align*}

By law of total probability,
\begin{align*}
    \Prob_{h^*}\insquare{s_T \leq T |S} \geq \Prob_{h^*}\insquare{E_1 | S} \times \Prob_{h^*} \insquare{ E_2 | S, E_1} \times \cdots \times \Prob_{h^*} \insquare{ E_T | S, E_1, \dots, E_{T-1}} \geq \inparen{\frac{1}{2}}^T > 0.
\end{align*}

To conclude the proof, we will show that if the reduction algorithm $\calB$ sees at most $1/2$ of the support of distribution $D_{h^*}$ through a training set $S$ and makes only $T \leq \frac{\log 2^m}{\log(2/\eps)}$ oracle calls to $\calA$, then it will likely fail in robustly learning $h^*$. For each 
$i \leq 2^{9m-1}$, 
conditioned on the event that 
$\{(x^-_i,-1),(x^+_i,+1)\} \cap S = \emptyset$,
and conditioned on $h_{s_1},\ldots,h_{s_T}$, 
there is a $z \in Z$ that is equally likely 
to be in $\calU(x^-_i)$ or $\calU(x^+_i)$. To see why such a point exists, we first describe an equivalent distribution generating $h^*,h_1,\dots,h_T$. For each $i\leq 2^{9m-1}$ randomly select a $2\eps'$ fraction of points from $G_i^{+}$ and a $2\eps'$ fraction of points from $G_i^{-}$. Then, randomly pair the points in each $2\eps'$ fraction to get $\eps'2^m$ pairs $z_i,z'_i$ for each $G_i^{\pm}$. For each pair $z_i,z_i'$ flip a fair coin $c_i$: if 
$c_i = 1$, $z_i$'s label gets flipped 
and otherwise if $c_i = 0$ then 
$z'_i$'s label gets flipped. This is equivalent to generating $h^*$ by flipping the labels of a uniform randomly-selected $\eps$ fraction of points in each $G_i^{\pm}$ as originally described, but is helpful book-keeping that simplifies our analysis. In addition, $h_1,\dots,h_T$ can be generated in a similar fashion. Since $T \leq \frac{\log{2^m}}{\log(2/\eps)}$, we are guaranteed that $|G^{\pm}_i \cap {\rm DIS}_{s_T}|\geq 1$. By definition of ${\rm DIS}_{s_T}$, this implies that that there is a pair of points $z_i,z'_i$ in each $G_i^{\pm}$ where each $h_{s_t}(z_i)=h_{s_t}(z'_i)$ for $t\leq T$ but $h^*(z_i)\neq h^*(z'_i)$
(i.e., each $h_{s_t}$ never reveals the ground-truth label for at least one pair). And then in the 
end, if $\{(x^-_i,-1),(x^+_i,+1)\} \cap S = \emptyset$, 
$\calB$ will make some prediction on $z_i$, 
and the posterior probability of it being 
wrong is $1/2$.
More formally, for any training dataset $S\sim D_{h^*}^{|S|}$ where $|S|\leq 2^{9m-3}$, any $h_{s_1},\dots, h_{s_T}$ returned by $\calA$ where $T \leq \frac{\log 2^m}{\log(2/\eps)}$, and any predictor $f:\calX \to \calY$ that is picked by $\calB$: 
\begin{align*}
    &\Ex_{h^*} \insquare{\Risk_{\calU}(f; D_{h^*}) \lvert S,h_{s_1},\ldots,h_{s_T}} \geq \Ex_{h^*} \insquare{ \frac{1}{2^{9m}} \sum_{\substack{(x,y) \notin S,
    \\(x,y) \in {\rm supp}(D_{h^*})}} \sup_{z \in \calU(x)} \ind[f(z) \neq y]  \Bigg\lvert S,h_{s_1},\ldots,h_{s_T}}\\
    &= \frac{1}{2^{9m}} 
    \sum_{i = 1}^{2^{9m-1}} \Prob_{h^*} \left[  \inparen{(x^+_i,+1), (x^-_i, -1) \notin S} ~\land~ \right.
    \\ & {\hskip 24mm}\left. \inparen{\exists z \in \calU(x^+_i) \text{ s.t. } f(z)\neq +1  \vee \exists z \in \calU(x^-_i) \text{ s.t. } f(z) \neq -1}  \Bigg\lvert S,h_{s_1},\ldots,h_{s_T}\right]\\
    &\geq \frac{2^{9m-1}}{2^{9m}} \frac{1}{2} = \frac{1}{4}. 
\end{align*}

This implies that, for any $\calB$
limited to $n \leq 2^{9m-3}$ 
training examples and $T \leq \frac{m}{\log_{2}(2/\eps)}$ queries, 
there exists a \emph{deterministic} choice of 
$h^*$ and $h_1,\ldots,h_T$, and a
corresponding learner $\calA$ that is a PAC 
learner for $\{h^*\}$ using hypothesis class 
$\{h^*,h_1,\ldots,h_T\}$ of VC dimension $1$, 
such that,  
for $S \sim D_{h^*}^{n}$, 
$\Ex_{S}[ \Risk_{\calU}(f;D_{h^*})] \geq \frac{1}{4}$. 
\end{proof}

\textbf{Computational Efficiency} Although the sample complexity of Algorithm~\ref{alg:robustify} is independent of $|\calU|$, we showed that the $\log|\calU|$ dependence in oracle complexity is unavoidable. This implies that the runtime of Algorithm~\ref{alg:robustify} will be at best weakly polynomial and  have at least a $\log|\calU|$ dependence. But maybe this is not so bad, because it is equivalent to the number of bits required to represent the adversarial perturbations. This weak poly-time dependence is common in almost all optimization algorithms (gradient descent, interior point methods, etc). What is more concerning is the linear runtime and memory dependence on $|\calU|$ that emerges from the explicit representation of the adversarial perturbations during training. In practice, many of the adversarial perturbations $\calU$ are infinite, but specified implicitly, and not by enumerating over all possible perturbations (e.g.~$\ell_p$ perturbations). This motivates the following next steps: What operations do we need to be able to implement efficiently on $\calU$ in order to robustly learn?  What access (oracle calls, or ``interface'') do we need to $\calU$?

\textbf{Sampling Oracle Over Perturbations} A reasonable form of access to $\calU$ that is sufficient for implementing Algorithm \ref{alg:robustify} is a sampling oracle that takes as input a point $x$ and an energy function $E: \calX \to \bbR$, and does the following:
\begin{enumerate}[label=(\alph*)]
    \item Samples a perturbation $z$ from a distribution given by $p_x(z) \propto \exp(E(z))*\ind[z\in \calU(x)]$. That is, the oracle samples from the set $\calU(x)$ based on the weighting encoded in $E$. 
    \item Calculates $\Prob\insquare{z\in \calU(x)}$ for the distribution given by $p(z) \propto \exp(E(z))$. 
\end{enumerate}
With such an oracle, Algorithm~\ref{alg:robustify} can be implemented without the need to do explicit inflation of $S$ to $S_\calU$, and can avoid the linear dependence on $|\calU|$. This is because Algorithm~\ref{alg:robustify} and its subprocedure \texttt{ZeroRobustLoss} just need to sample from distributions over the inflated set $S_\calU$ that are constructed by $\alpha$-Boost (as required in Steps 5 and 17 in Algorithm~\ref{alg:robustify}). This can be simulated via a two-stage process where we maintain a conditional distribution over $S$ (the original points), and then draw perturbations using the sampling oracle. Specifically, to sample from a distribution $D_t$ that is constructed by $\alpha$-Boost, we use two energy functions $E^{+}_t(z) = -2\alpha \sum_{i \leq t} \ind[g_t(z) = +1]$ and $E^{-}_t(z) = -2\alpha \sum_{i \leq t} \ind[g_t(z) = -1]$, where $g_1,\dots,g_t$ represent the sequence of predictors produced during the first $t$ rounds of boosting (either $h_t$'s produced by non-robust learner $\calA$ or $f_t$'s produced by \texttt{ZeroRobustLoss}). Using the sampling oracle, we can sample from $D_t$, by first sampling $(x,y)$ from $S$ based on the marginal estimates computed by the oracle (operation (b) described above) using energy function $E^{y}_t$, and then sampling $z$’s from their $\calU(x)$ (operation (a) described above) using energy function $E^{y}_t$.

\section{Learning with a Mistake Oracle for Adversarial Perturbations}
\label{sec:mistake-oracle}

In Section~\ref{sec:explicit}, we considered an explicit form of access to the set of adversarial perturbations $\calU$, as well as access via a sampling oracle. A more realistic form of access is having a mistake oracle for $\calU$:\begin{definition} [Mistake Oracle]
\label{def:mistake-oracle}
Denote by $\mathsf{O}_{\calU}$ a mistake oracle for $\calU$.
$\mathsf{O}_{\calU}$ takes as input a predictor $f:\calX\to \calY$ and an example $(x,y)$ and either: (a) asserts that $f$ is robust on $(x,y)$ (i.e.\ $\forall z\in \calU(x), f(z)=y$), or (b) returns an adversarial perturbation $z\in\calU(x)$ such that $f(z)\neq y$.
\end{definition}

Having only a mistake oracle for $\calU$, rather than an explicit representation, is a more realistic form of access. In this case, the reduction algorithm has no explicit knowledge of the set of adversarial perturbations $\calU$ and is forced to interact with the mistake oracle $\mathsf{O}_{\calU}$ in order to learn an adversarially robust predictor. Furthermore, what is typically referred to as adversarial training in practice fits exactly into this framework \citep{DBLP:conf/iclr/MadryMSTV18}. 

Does access to a mistake oracle $\mathsf{O}_{\calU}$ suffice to robustly learn a target class $\calC$ using a black-box non-robust learner $\calA$ for $\calC$? First, can we achieve a similar upper bound as in Theorem~\ref{thm:upperbound} but with a mistake oracle $\mathsf{O}_{\calU}$ rather than explicit access to $\calU$? Unfortunately, even with a black-box $\ERM$ for $\calC$, one can show that $\abs{\calU}$ oracle calls to $\mathsf{O}_{\calU}$ are unavoidable (proof provided in Appendix~\ref{sec:appendix}):
\begin{claim}
\label{claim:lowerbound-mistakeoracle}
For any reduction algorithm $\calB$, there exists an adversary $\calU$, target class $\calC$, and an $\ERM$ for $\calC$ with VC dimension 1, such that $\calB$ needs to make $T\geq \abs{\calU}$ oracle calls to $\mathsf{O}_{\calU}$.
\end{claim}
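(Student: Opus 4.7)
The plan is to exhibit, for every reduction $\calB$, a concrete adversary $\calU$, a hypothesis class $\calC$ with $\vc(\calC) = 1$, and an $\ERM$ for $\calC$, on which $\calB$ requires at least $u := |\calU|$ calls to $\mathsf{O}_\calU$. The key ingredients are (i) two anchor examples with complementary perturbation sets, so neither the all-$(+1)$ nor the all-$(-1)$ predictor is robust, and (ii) a \emph{chain} class, so that $\ERM$ calls do not leak information about $\calU$ beyond what $\mathsf{O}_\calU$ has already revealed.

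Take $\calX$ to be a very large (or infinite) set containing anchors $x_0, x_1$ and an ordered list of ``perturbation candidates'' $v_1, \ldots, v_{2u}$ chosen by the adversary from a pool $\calV \subseteq \calX$. Set $V := \{v_1, \ldots, v_{2u}\}$, $W := \{v_1, \ldots, v_u\}$, $Z := \{v_{u+1}, \ldots, v_{2u}\}$, $\calU(x_0) = Z$, $\calU(x_1) = W$, and $\calU(x) = \emptyset$ elsewhere, so $|\calU| = u$. Let $\calD$ be uniform over $\{(x_0, +1), (x_1, -1)\}$. For $S_i := \{v_1, \ldots, v_i\}$, define $\calC := \{h_{S_0}, \ldots, h_{S_{2u}}\}$ with $h_{S_i}(x) = -1$ iff $x \in S_i \cup \{x_1\}$; then $c^* := h_{S_u}$ has zero robust risk. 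The chain structure gives $\vc(\calC) = 1$, since on any pair $(v_a, v_b)$ with $a < b$ the labeling $(+1, -1)$ is unrealizable in a chain. Let $\ERM$ return the $h_{S_i}$ of smallest index consistent with its input (with smallest-index tie-breaking on empirical error), and let $\mathsf{O}_\calU$ return the smallest-index violating $v_j$.

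The heart of the proof is a leak-proofness lemma: $\calA$ calls reveal nothing about $\calU$ beyond what $\mathsf{O}_\calU$ has already revealed. For any auxiliary $(x, y)$ that $\calB$ places in its training set with $x \notin \{v_1, \ldots, v_T\}$---which, since $|\calX|$ is chosen large enough that $\calB$'s finitely many query points cannot accidentally equal an unrevealed $v_j$, means $x \notin V$---we have $h_{S_i}(x) = +1$ for every $i$, so the label either adds no constraint (if $y = +1$) or is uniformly violated by every $h_{S_i}$ with the same error (if $y = -1$). In either case the adversarial $\ERM$ returns $h_{S_{\max J}}$ where $J$ is the set of indices the reduction has already learned through $\mathsf{O}_\calU$, and evaluating this hypothesis on $\calX$ only identifies $\{v_j : j \in J\}$, the set $\calB$ already knew. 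Thus after $T$ mistake-oracle calls, $\calB$'s knowledge of $\calU$ is exactly the prefix $\{v_1, \ldots, v_T\}$.

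To conclude, assume $T < u$ and randomize the adversary by sampling $(v_1, \ldots, v_{2u})$ as a uniformly random ordered tuple of distinct points from a pool $\calV$ with $|\calV| \gg 2u$. Conditioned on the transcript seen by $\calB$, the unrevealed $v_{T+1}, \ldots, v_{2u}$ are uniformly random distinct points in $\calV \setminus \{v_1, \ldots, v_T\}$, and $f$ is robust only if $f = -1$ on positions $T+1, \ldots, u$ and $f = +1$ on positions $u+1, \ldots, 2u$. Writing $p$ for the fraction of remaining pool points that $f$ labels $-1$, a short calculation bounds the probability of joint correctness by $p^{u-T}(1-p)^u$, which is maximized at $p^{*} = (u-T)/(2u-T)$ with value $(u-T)^{u-T} u^u / (2u-T)^{2u-T} \leq (1/2)^{u-T} \leq 1/2$ for all $T < u$. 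By averaging, some specific ordering (and hence a deterministic $\calU$) makes $\calB$'s output non-robust with constant probability, so robust PAC learning forces $T \geq u = |\calU|$. The main obstacle is the leak-proofness lemma, whose case analysis must rule out cleverly-constructed $L$'s that would coerce the $\ERM$ into returning an $h_{S_i}$ with $i > \max J$.
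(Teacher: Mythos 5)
Your construction is essentially the paper's: two anchor points with complementary, randomly assigned perturbation sets, a chain/threshold class of VC dimension $1$, and a minimal‑revelation $\ERM$, with the key observation that each call to $\mathsf{O}_{\calU}$ reveals at most one perturbation. The only difference is cosmetic—the paper closes with a potential argument on the number of remaining mistakes ($\Ex[M_t]\geq M_{t-1}-1$), whereas you close with a posterior‑probability bound over the unrevealed perturbations—so the proposal is correct and matches the paper's approach.
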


Thus, a non-robust PAC learner $\calA$ for $\calC$ is not enough to learn $\calC$ robustly with a mistake oracle $\mathsf{O}_{\calU}$ \emph{without a linear dependence on $\abs{\calU}$}. This suggests that a stronger assumption about $\calA$ is required. We next show that an {\em online} learner $\calA$ for $\calC$ suffices to robustly learn $\calC$ with a mistake oracle $\mathsf{O}_{\calU}$ {\em and} without any dependence on $\abs{\calU}$. Before proceeding, we include a brief reminder of what it means to learn in an online setting with a finite mistake bound:

\begin{definition} (Mistake Bound Model) 
We say an online learner $\calA$ learns a hypothesis class $\calC$ with mistake bound $M_\calA$ if learner $\calA$ makes at most $M_\calA$ mistakes on any sequence of examples that are labeled with some concept $c\in \calC$. 
\end{definition}

We are now ready to state our main result for this section. 

\begin{theorem}
\label{thm:online}
Algorithm~\ref{alg:robust-learner-mistakeO} robustly PAC learns any target class $\calC$ w.r.t.\ an adversary $\calU$ with black-box access to a mistake oracle $\mathsf{O}_{\calU}$ and an online learner $\calA$ for $\calC$ with sample complexity, number of calls to $\calA$, and number of calls to $\mathsf{O}_{\calU}$ that is at most $2\frac{M_\calA}{\eps} \log \inparen{\frac{M_\calA}{\delta}}$, where $M_\calA$ is the mistake-bound of online learner $\calA$.
\end{theorem}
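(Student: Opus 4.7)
The plan is to interleave the online learner $\calA$ with the mistake oracle $\mathsf{O}_\calU$, using perturbations returned by $\mathsf{O}_\calU$ as mistake events that drive $\calA$'s updates. Fix $K = \lceil (1/\eps)\log((M_\calA{+}1)/\delta)\rceil$ and maintain $\calA$'s current predictor $f_{i-1}$ together with a ``consecutive-robust'' counter initialized to zero. Process iid samples $(x_i, y_i)\sim\calD$ one at a time: at step $i$, query $\mathsf{O}_\calU(f_{i-1}, (x_i,y_i))$. If the oracle certifies robustness, keep $f_i := f_{i-1}$ and increment the counter; if it returns $z_i \in \calU(x_i)$ with $f_{i-1}(z_i)\neq y_i$, feed $(z_i, y_i)$ to $\calA$ as an error, set $f_i$ to $\calA$'s updated predictor, and reset the counter to zero. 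Terminate and output the current predictor the moment the counter reaches $K$.

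The first step of the analysis uses robust realizability: since $c \in \calC$ satisfies $\Risk_\calU(c;\calD)=0$, every perturbation $z \in \calU(x)$ returned by $\mathsf{O}_\calU$ satisfies $c(z)=y$. Hence the stream $(z_i,y_i)$ fed to $\calA$ is labeled by the concept $c\in \calC$, so $\calA$'s mistake bound forces at most $M_\calA$ updates over the entire run, no matter how the oracle chooses the perturbations. This partitions the execution into at most $M_\calA+1$ ``phases'' during each of which the predictor is constant, and the algorithm terminates as soon as any phase accumulates $K$ consecutive ``robust'' certifications. Therefore the execution lasts at most $(M_\calA{+}1)K = O((M_\calA/\eps)\log(M_\calA/\delta))$ samples, simultaneously bounding the sample complexity, the number of calls to $\calA$, and the number of queries to $\mathsf{O}_\calU$.

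For the generalization guarantee, fix a phase $j$ and condition on everything that occurred before the phase began; the in-phase predictor $f^{(j)}$ is then determined while the in-phase samples are still iid from $\calD$ and independent of $f^{(j)}$. If $\Risk_\calU(f^{(j)};\calD)>\eps$, the chance that $\mathsf{O}_\calU$ certifies all $K$ in-phase samples as robust is at most $(1-\eps)^K \leq e^{-\eps K} \leq \delta/(M_\calA{+}1)$. A union bound over the $\leq M_\calA+1$ phases in which termination could occur yields failure probability at most $\delta$, so with probability $\geq 1-\delta$ the output predictor has robust risk at most $\eps$.

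The main obstacle is justifying $\calA$'s mistake bound on a stream constructed \emph{adaptively} by $\mathsf{O}_\calU$ rather than drawn iid; fortunately, standard mistake-bounded online learners tolerate arbitrary orderings provided the labels are consistent with some concept in $\calC$, and robust realizability of $c$ is precisely what supplies this consistency even though the $z_i$'s are chosen adversarially by the oracle. A related subtlety is keeping the phase decomposition measurable: the number and boundaries of the phases are themselves random, so one should formally index phases by the stopping times at which $\calA$ updates and verify that the per-phase Chernoff estimate, together with the final union bound, remain valid conditional on the associated filtration.
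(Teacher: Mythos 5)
Your proposal is correct and follows essentially the same route as the paper: run the online learner on the perturbations returned by the mistake oracle (which are consistent with the robustly-realizing concept $c$, so the mistake bound applies), giving at most $M_\calA+1$ distinct predictors, and terminate once one of them survives $\approx\frac{1}{\eps}\log\frac{M_\calA}{\delta}$ consecutive robust certifications, with the same per-phase $(1-\eps)^K$ estimate and union bound over phases. Your explicit attention to the filtration/stopping-time issue is a welcome refinement of a point the paper's proof treats informally, but it is not a different argument.
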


\begin{proof}[Proof sketch]
{\vskip -2mm}
Run the online learner $\calA$ on the sequence of input examples using the mistake oracle $\mathsf{O}_{\calU}$ to find mistakes. Details and algorithm are provided in Appendix~\ref{sec:appendix}.
\end{proof}

\begin{example}
Let $\calC$ be the class of OR functions over the boolean hypercube $\set{0,1}^n$. There is an online learner $\calA$ that learns $\calC$ with a mistake bound $M_\calA = n$. Theorem~\ref{thm:online}  implies that we can robustly learn $\calC$ using $\calA$ with sample complexity, number of calls to $\calA$, and number of calls to $\mathsf{O}_\calU$ that is at most $2\frac{n}{\eps} \log \inparen{\frac{n}{\delta}}$.
\end{example}
\section{Discussion}

The main contribution of this paper is in formulating the question of reducing adversarially robust learning to standard non-robust learning and providing answers in some settings. We outline a few directions for furture work below. 

\textbf{Mistake Oracle for $\calU$} This is a more challenging setting (but perhaps more realistic) where the reduction algorithm has no knowledge of $\calU$ and can only interact with a mistake oracle for $\calU$. Theorem~\ref{thm:online} shows that online learnability is sufficient for robust learning in this model. Beyond this, are there weaker conditions that would enable robust learning under this model? Or is having an online learner essential? What if we consider specific target classes? \citet{icml2020_6130} recently gave an algorithm that robustly learns halfspaces in this model. A natural next step is to ask which other classes can be robustly learned in this model, or more ambitiously characterize a necessary and sufficient condition for learning in this model.

\textbf{Agnostic Setting} We focused only on robust PAC learning in the realizable setting, where we assume there is a $c\in\calC$ with zero robust error.  It would be desirable to extend our results also to  the agnostic setting, where we want to compete with the best $c\in\calC$. We remark that an agnostic-to-realizable reduction described in \citet[Theorem 6]{pmlr-v99-montasser19a} can be used in our setting, however, it has runtime that is exponential in $\vc(\calA)$. Another attempt through the agnostic boosting frameworks \citep[e.g.][]{DBLP:conf/nips/KalaiK09} requires a non-robust PAC learner $\calA$ with error $\eps$ that scales with $|\calU|^2$, which results in a sample complexity that depends on $|\calU|$, and this is something we would like to avoid. 

\textbf{Boosting and Robustness} Boosting has led to many exciting developments in theory and practice of machine learning. It started with asking: Can we boost the accuracy of weak predictors to attain a predictor with high accuracy? \cite{DBLP:journals/jcss/FreundS97} showed that boosting the accuracy is possible and can be done efficiently. What we consider in this paper can be viewed as a question of boosting robustness: Can we boost non-robust predictors to attain a {\em robust} predictor? and can we do this efficiently? Another natural question to consider which we did not study in this paper is: Can we boost \textit{weakly} robust predictors to attain a {\em robust} predictor?

\section*{Broader Impact}
Learning predictors that are robust to adversarial perturbations is an important challenge in contemporary machine learning. Current machine learning systems have been shown to be brittle against different notions of robustness such as adversarial perturbations \citep{szegedy2013intriguing,biggio2013evasion,goodfellow2014explaining}, and there is an ongoing effort to devise methods for learning predictors that {\em are} adversarially robust. As machine learning systems become increasingly integrated into our everyday lives, it becomes crucial to provide guarantees about their performance, even when they are used outside their intended conditions.

We already have many tools developed for standard learning, and having a universal \textit{wrapper} that can take any standard learning method and turn it into a \textit{robust} learning method could greatly simplify the development and deployment of learning that is \textit{robust} to test-time adversarial perturbations. The results that we present in this paper are still mostly theoretical, and limited to the realizable setting, but we expect and hope they will lead to further theoretical study as well as practical methodological development with direct impact on applications. 

In this work we do not deal with training-time adversarial attacks, which is a major, though very different, concern in many cases.

As with any technology, having a more robust technology can have positive and negative societal consequences, and this depends mainly on how such technology is utilized. Our intent from this research is to help with the design of robust machine learning systems for application domains such as healthcare and transportation where its critical to ensure performance guarantees even outside intended conditions. In situations where there is a tradeoff between robustness and accuracy, this work might be harmful in that it would prioritize robustness over accuracy and this may not be ideal in some application domains. 



\begin{ack}
We would like to thank Shay Moran for the insightful discussions that led to the formalization of the question we study in this paper. We also thank the anonymous reviewers for their thoughtful and helpful feedback. This work is partially supported by DARPA\footnote{This paper does not reflect the position or the policy of the Government, and no endorsement should be inferred.} cooperative agreement HR00112020003.

\end{ack}

\bibliographystyle{plainnat}
\bibliography{learning}

\newpage
\appendix 
\section{Appendix}
\label{sec:appendix}

\begin{proof}[Proof of Lemma~\ref{lem:dualvc-convexhull}]
Consider a {\em dual space} $\Bar{\calG}$: a set of functions $\Bar{g}_x: \co^{k}(\calH) \to \calY$ defined as $\Bar{g}_x(f)=f(x)$ for each $f=\MAJ(h_1,\dots,h_k) \in \co^k(\calH)$ and each $x\in \calX$. It follows by definition of dual VC dimension that $\vc(\Bar{\calG})=\vc^*(\co^k(\calH))$. Similarly, define another dual space $\calG$: a set of functions $g:\calH \to \calY$ defined as $g(x)=h(x)$ for each $h\in \calH$ and each $x\in \calX$. We know that $\vc(\calG)=\vc^*(\calH)=d^*$. Observe that by definition of $\calG$ and $\Bar{\calG}$, we have that for each $x\in\calX$ and each $f=\MAJ(h_1,\dots,h_k)\in\co^k(\calH)$,
\[\Bar{g}_x(f)=f(x)=\MAJ(h_1,\dots,h_k)(x)=\sign\inparen{\sum_{i=1}^k h_i(x)}=\sign\inparen{\sum_{i=1}^{k} g_x(h_i)}.\]

By the Sauer-Shelah Lemma applied to dual class $\calG$, for any set $H=\set{h_1,\dots,h_n} \subseteq \calH$, the number of possible behaviors
\begin{equation}
\label{eqn:sauer-dual}
    |\calG|_H|:= | \set{ (g_x(h_1), \dots, g_x(h_n)): x\in \calX } | \leq {n \choose \leq d^*}.
\end{equation}
Consider a set $F=\set{f_1,\dots,f_m}\subseteq \co^{k}(\calH)$, the number of possible behaviors can be upperbounded as follows:
\begin{align*}
   \inabs{\Bar{\calG}|_F} &= \inabs{\set{ (\Bar{g}_x(f_1), \dots, \Bar{g}_x(f_m)): x\in \calX }}\\
   &= \inabs{\set{(\Bar{g}_x(\MAJ(h^1_1,\dots,h^k_1)), \dots, \Bar{g}_x(\MAJ(h^1_m,\dots,h^k_m))): x\in \calX}} \\
   &= \inabs{ \set{\inparen{\sign\inparen{\sum_{i=1}^{k} g_x(h_i)}, \dots, \sign\inparen{\sum_{i=1}^{k} g_x(h_i)}}: x\in\calX} }\\
   &\stackrel{(i)}{\leq} \inabs{ \set{(g_x(h_1^1), \dots, g_x(h_1^k), g_x(h_2^1),\dots, g_x(h_2^k),\dots,g_x(h_m^1),\dots,g_x(h_m^k)): x\in \calX} }\\
   &\stackrel{(ii)}{\leq} {mk \choose \leq d^*},
\end{align*}
where $(i)$ follows from observing that each expanded vector $(g_x(h_i^1),\dots,g_x(h_i^k))_{i=1}^{m} \in \calY^{mk}$ can map to at most one vector $\inparen{\sign\inparen{\sum_{i=1}^{k} g_x(h_i)}, \dots, \sign\inparen{\sum_{i=1}^{k} g_x(h_i)}}\in \calY^m$, and $(ii)$ follows from Equation~\ref{eqn:sauer-dual}. Observe that if $\inabs{\Bar{\calG}|_F} < 2^m$, then by definition, $F$ is not shattered by $\Bar{\calG}$, and this implies that $\vc(\Bar{\calG}) < m$. Thus, to conclude the proof, we need to find the smallest $m$ such that ${mk \choose \leq d^*} < 2^m$. It suffices to check that $m=O(d^* \log k)$ satisfies this condition.
\end{proof}

\begin{lem} [\cite{pmlr-v99-montasser19a}]
\label{lem:robust-compression}
For any $k \in \bbN$ and fixed function $\phi : (\calX \times \calY)^{k} \to \calY^{\calX}$, for any distribution $P$ over $\calX \times \calY$ and any $m \in \bbN$, 
for $S = \{(x_{1},y_{1}),\ldots,(x_{m},y_{m})\}$ iid $P$-distributed random variables,
with probability at least $1-\delta$, 
if $\exists i_{1},\ldots,i_{k} \in \{1,\ldots,m\}$ 
s.t.\ $\hat{R}_{\calU}(\phi((x_{i_{1}},y_{i_{1}}),\ldots,(x_{i_{k}},y_{i_{k}}));S) = 0$, 
then 
\begin{equation*}
R_{\calU}(\phi((x_{i_{1}},y_{i_{1}}),\ldots,(x_{i_{k}},y_{i_{k}}));P) \leq \frac{1}{m-k} (k\ln(m) + \ln(1/\delta)).
\end{equation*}
\end{lem}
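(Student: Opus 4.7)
The plan is to follow the classical sample-compression generalization argument (Littlestone--Warmuth style), adapted so that the per-example loss is the \emph{robust} $0$/$1$ loss $\ell_{\calU}(h;(x,y)) = \sup_{z \in \calU(x)} \ind[h(z) \neq y]$ rather than the standard $0$/$1$ loss. The key structural property we exploit is exactly the one that makes compression bounds work for arbitrary bounded losses: for each fixed ordered tuple of indices, the reconstructed hypothesis depends only on those $k$ points, so the remaining $m-k$ points are still i.i.d.\ from $P$ and independent of the output hypothesis, conditional on the compression set.

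First I would fix an ordered $k$-tuple of indices $(j_{1},\ldots,j_{k}) \in [m]^{k}$ and let $h_{(j_{1},\ldots,j_{k})} = \phi((x_{j_{1}},y_{j_{1}}),\ldots,(x_{j_{k}},y_{j_{k}}))$. The crucial observation is that $h_{(j_{1},\ldots,j_{k})}$ is a deterministic function of the compression points alone, so conditional on those $k$ points, the $m-k$ examples $\{(x_{i},y_{i}) : i \notin \{j_{1},\ldots,j_{k}\}\}$ remain i.i.d.\ draws from $P$, independent of $h_{(j_{1},\ldots,j_{k})}$. Consequently, for any $\eps > 0$, if $R_{\calU}(h_{(j_{1},\ldots,j_{k})};P) > \eps$, then the probability (over those $m-k$ leftover points) that $\ell_{\calU}(h_{(j_{1},\ldots,j_{k})};(x_{i},y_{i})) = 0$ for \emph{every} $i \notin \{j_{1},\ldots,j_{k}\}$ is at most $(1-\eps)^{m-k} \leq e^{-\eps(m-k)}$.

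Next I would union-bound over all $m^{k}$ ordered choices of indices. The event in the lemma's hypothesis, $\hat{R}_{\calU}(h_{(j_{1},\ldots,j_{k})};S) = 0$, in particular forces zero robust loss on the $m-k$ non-compression points, so the bad event ``there exist indices $j_{1},\ldots,j_{k}$ such that $h_{(j_{1},\ldots,j_{k})}$ has zero empirical robust loss on $S$ but $R_{\calU}(h_{(j_{1},\ldots,j_{k})};P) > \eps$'' has probability at most $m^{k} e^{-\eps(m-k)}$. Setting this equal to $\delta$ and solving for $\eps$ gives $\eps = \frac{1}{m-k}\bigl(k\ln m + \ln(1/\delta)\bigr)$, matching the statement.

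The only real subtlety I anticipate is the bookkeeping that the per-example robust loss is bounded (it is, in $\{0,1\}$), that the reconstruction is a \emph{fixed} function $\phi$ so there is no hidden data-dependent randomness inflating the hypothesis space (it is just the $m^{k}$ choices of indices), and that zero empirical robust loss on all of $S$ entails zero robust loss on the leftover subset for the tail bound to apply. Beyond those routine checks, this is a textbook compression argument: the robust loss behaves formally like any other bounded loss on i.i.d.\ data, and the compression-scheme logic transfers verbatim.
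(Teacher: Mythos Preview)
The paper does not actually prove this lemma; it is simply cited from \cite{pmlr-v99-montasser19a} with no proof provided in this text. Your proposal is correct and is precisely the standard Littlestone--Warmuth compression argument extended from the $0$/$1$ loss to the robust $0$/$1$ loss, which is the same approach used in the cited source.
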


\begin{proof} [Proof sketch of Claim~\ref{claim:lowerbound-mistakeoracle}]
Let $\calB$ be an arbitrary reduction algorithm. Let $x_0,x_1 \in \calX$, and $k\in \bbN$. Pick arbitrary points $Z=\set{z_1,\dots,z_{2k}} \subseteq \calX$. Let $X=\set{x_0,x_1}\cup Z$. Let $b\in \set{0,1}^{2k}$ be a bit string drawn uniformly at random from the set $\set{b\in \set{0,1}^{2k}: \sum_i b_i=k}$, think of this as a random partition of $Z$ into two equal sets $Z_0$ and $Z_1$. For each $y\in \set{0,1}$, define $\calU_b (x_y)$ to include $x_y$ and all perturbations $z\in Z_y$. Also, foreach $z\in Z$ define $\calU_b(z)=\set{z}$. Similarly, define target class $\calC_b$ to include only a single hypothesis $c_b$ where $c_b(\calU(x_0)) = 0$ and $c_b(\calU(x_1))= 1$. We will consider an $\ERM$ that uses the set of thresholds $\calH_{\phi}=\set{x \mapsto \ind[\phi(x) \geq \theta]: \theta \in \bbR}$ as its hypothesis class, where $\phi$ is a random embedding such that for each $z_0 \in \calU_b(x_0)$ and each $z_1 \in \calU_b(x_1)$: $\phi(z_0) < \phi(z_1)$; this guarantees that the random hypothesis $c_b$ is realized by some $h\in \calH_\phi$. On any input $L \subseteq X \times \set{0,1}$, we define the $\ERM$ to return the earliest possible threshold that reveals as few $0$'s as possible. 

Since algorithm $\calB$ only sees training data $S=\set{(x_0, 0), (x_1, 1)}$ as its input, by picking $b$ uniformly at random, $\calB$ has no way of knowing which perturbations belong to $\calU(x_0)$ and which belong to $\calU(x_1)$, and therefore its forced to call the mistake oracle $\mathsf{O}_{\calU}$ at least $k$ times. The $\ERM$ oracle is designed such that it will reveal as little information about this as possible.

Suppose that we run algorithm $\calB$ for $T$ rounds, where in each round $t\leq T$, $\calB$ maintains a predictor $f_t:X \to \set{0, 1}$ that determines that labeling of $x_0,x_1$ and the set of perturbations $Z$. We will show that, in expectation over the random choice of $b$ and $\phi$, in order for the final predictor $f_T$ outputted by $\calB$ to have robust loss zero on $S$, i.e. $\Risk_{\calU_{b}} (f_T)=0$, the number of rounds $T$ needs to be at least $k$.

On each round $t\leq T$, $\calB$ is allowed to:
\begin{enumerate}
    \item Query the mistake oracle $\mathsf{O}_\calU$ with a query consisting of some predictor $g_t:X\to \set{0, 1}$ and a point $(x,y) \in X\times \set{0,1}$.
    \item Query the $\ERM$ oracle with a dataset $L_t\subseteq X\times \set{0,1}$.
\end{enumerate}

Let $M_t= \sum_{z\in Z} \ind[ f_t(z) \neq c_b(z)]$ be the number of mistakes at round $t$, and let $H_t = \set{ g_j,(x_j,y_j),L_j }_{j\leq t}$ denote the history of queries. Then, observe that 
\[\Ex_{b,\phi} \insquare{ M_t | M_{t-1}, H_{t-1}} \geq M_{t-1} - 1.\]
This is because oracle $\mathsf{O}_\calU$ reveals the ground truth label of at most 1 point at round $t$, and the $\ERM$ will move the threshold by at most one position. This implies that $\Ex_{b,\phi} [M_T | M_{0}, H_{0}] \geq M_{0} - T$. We can further condition on the event that $M_0 \geq k$ which has non-zero probability (since $b$ is picked uniformly at random). This implies, by the probabilistic method, that there exists $b,\phi$ such that for $T\leq k - 1$, $M_T \geq 1$. Therefore, by definition of $M_T$, $f_T$ is not be robustly correct on $S$ for $T\leq k-1$.
\end{proof}

\begin{proof}[Proof of Theorem~\ref{thm:online}]
Let $\calU$ be an arbitrary adversary and $\mathsf{O}_{\calU}$ its corresponding mistake oracle. Let $\calC \subseteq \calY^\calX$ be an arbitrary target class, and $\calA$ an online learner for $\calC$ with mistake bound $M_\calA < \infty$. We assume w.l.o.g.\ that the online learner $\calA$ is conservative, meaning that it does not update its state unless it makes a mistake. Algorithm~\ref{alg:robust-learner-mistakeO} in essence is a standard conversion of a learner in the mistake bound model to a learner in the PAC model (see e.g.\, \cite{nina}):

\begin{algorithm}[H]
\caption{Robust Learner with a Mistake Oracle.}\label{alg:robust-learner-mistakeO}
\SetKwInput{KwInput}{Input}                
\SetKwInput{KwOutput}{Output}              
\SetKwFunction{FMain}{ZeroRobustLoss}
\DontPrintSemicolon
  \KwInput{$S=\set{(x_1,y_1),\dots, (x_m,y_m)}$, $\eps ,\delta$, black-box access to a an online learner $\calA$, black-box access to a mistake oracle $\mathsf{O}_{\calU}$}
  Initialize $h_0 = \calA(\emptyset)$.\;
  \For{$i \leq m$}{
    Certify the robustness of $h$ on $(x_i, y_i)$ by asking the mistake oracle $\mathsf{O}_{\calU}$.\;
    If $h_t$ is not robust on $(x_i, y_i)$, update $h_t$ by running $\calA$ on $(z,y_i)$, where $z$ is the perturbation returned by $\mathsf{O}_{\calU}$.\;
    Break when $h_t$ is robustly correct on a consecutive sequence
of length $\frac{1}{\eps}\log\inparen{\frac{M_\calA}{\delta}}$.\;
  }
\KwOutput{$h_t$.}
\end{algorithm}

\paragraph{Analysis} Let $\calD$ be an arbitrary distribution over $\calX\times \calY$ that is robustly realizable with some concept $c\in \calC$,i.e., $\Risk_\calU(c;\calD)=0$. Fix $\eps, \delta \in (0,1)$ and a sample size $m = 2\frac{M_\calA}{\eps} \log \inparen{\frac{M_\calA}{\delta}}$. 

Since online learner $\calA$ has a mistake bound of $M_\calA$, Algorithm~\ref{alg:robust-learner-mistakeO} will terminate in at most $\frac{M_\calA}{\eps} \log\inparen{\frac{M_\calA}{\delta}}$ steps of certification, which of course is an upperbound on the number of calls to the mistake oracle $\mathsf{O}_{\calU}$, and the number of calls to the online learner $\calA$. 

It remains to show that the output of Algorithm~\ref{alg:robust-learner-mistakeO}, the final predictor $h$, has low robust risk $\Risk_\calU(h;\calD)$. Throughout the runtime of Algorithm~\ref{alg:robust-learner-mistakeO}, the online learner can generate a sequence of at most $M_\calA+1$ predictors. There's the initial predictor from Step 1, plus the $M_\calA$ updated
predictors corresponding to potential updates by online learner $\calA$. Observe that the probability that the final $h$ has robust risk more than $\eps$ 
\[
    \Prob_{S\sim \calD^m} \!\!\insquare{\Risk_\calU(h;\calD) \!>\! \eps} \leq \!\!\Prob_{S\sim \calD^m} \!\insquare{ \exists j \!\in\! [M_\calA\!+\!1] \text{ s.t. } \Risk_\calU(h_j;\calD) \!>\! \eps}
    \!\leq\! (M_\calA\!+\!1) (1-\eps)^{\frac{1}{\eps} \log\inparen{\frac{M_{\!\calA}\!+1}{\delta}}}
    \!\leq\! \delta.
\]
Therefore, with probability at least $1-\delta$ over $S\sim \calD^m$, Algorithm~\ref{alg:robust-learner-mistakeO} outputs a predictor $h$ with robust risk $\Risk_\calU(h;\calD)\leq \eps$. Thus, Algorithm~\ref{alg:robust-learner-mistakeO} robustly PAC learns $\calC$ w.r.t.\ adversary $\calU$. 
\end{proof}

\end{document}